\numberwithin{equation}{section}
\newtheorem{theorem}{Theorem}[section]
\newtheorem{proposition}[theorem]{Proposition}
\newtheorem{lemma}[theorem]{Lemma}
\newtheorem{definition}[theorem]{Definition}
\theoremstyle{definition}
\newtheorem{remark}[theorem]{Remark}
\newcommand*{\supp}{\ensuremath{\mathrm{supp\,}}}
\newcommand*{\R}{\ensuremath{\mathbb{R}}}
\newcommand*{\E}{\ensuremath{\mathbb{E}}}
\renewcommand*{\tilde}{\widetilde}
\renewcommand*{\hat}{\widehat}
\newcommand{\sF}{\mathsf{F}}
\newcommand{\cL}{\mathcal{L}}
\begin{document}

\title{Deep Neural Operator Learning for Probabilistic Models}

\author{Erhan
Bayraktar \thanks{\noindent Department of
Mathematics, University of Michigan, Ann Arbor, 48109; email: erhan@umich.edu. This author is supported in part the Susan M. Smith Professorship and in part by by the National Science Foundation under grant \#DMS-2507940.} ~~~Qi Feng\thanks{\noindent Department of
Mathematics, Florida State University, Tallahassee, 32306; email: qfeng2@fsu.edu. This author is partially supported by the National Science Foundation under grant \#DMS-2420029.}
 ~~~  Zecheng Zhang\thanks{\noindent Department of Applied Computational Mathematics and Statistics, University of Notre Dame, Notre Dame 46556; zzhang48@nd.edu. This author is partially supported by the Department of Energy DE-SC0025440}~~~  Zhaoyu Zhang\thanks{ \noindent Department of
Mathematics, University of California, Los Angeles, CA, 90095;
email: zhaoyu@math.ucla.edu. } 
}
\maketitle

\begin{abstract}
We propose a deep neural-operator framework for a general class of probability models. Under global Lipschitz conditions on the operator over the entire Euclidean space—and for a broad class of probabilistic models—we establish a universal approximation theorem with explicit network-size bounds for the proposed architecture. The underlying stochastic processes are required only to satisfy integrability and general tail-probability conditions. We verify these assumptions for both European and American option-pricing problems within the forward–backward SDE (FBSDE) framework, which in turn covers a broad class of operators arising from parabolic PDEs, with or without free boundaries. Finally, we present a numerical example for a basket of American options, demonstrating that the learned model produces optimal stopping boundaries for new strike prices without retraining.
\end{abstract}

\section{Introduction}
Operator learning \cite{chen1995universal, lin2023b, lu2021learning, li2020fourier, li2023fourier, wang2025laplacian} uses deep neural networks to approximate mappings between functions or function spaces, enabling efficient solutions to a wide range of computational science problems. 
For instance, it can learn the mapping from the initial condition of a partial differential equation (PDE) to its corresponding solution. 
Another example is to learn the mapping from fine-scale PDE solutions to coarse-scale ones \cite{howard2022multifidelity, leung2022nh}, effectively performing model upscaling. 
A key advantage of operator learning is its ability to handle parametric PDEs. 
For example, when the PDE initial condition is parameterized by free variables, the operator learning framework can learn the mapping between the space formed by all initial conditions and the space of their corresponding solution space. 
Once trained, a deep neural operator (DNO) can instantly predict the solution for any new initial condition within the same function space. 
Compared to the standard numerical solvers, operator learning offers much faster and more cost-efficient computations. 
This approach has been widely applied to inverse PDE problems, where it is often integrated with standard numerical solvers—either providing an initial approximation refined by numerical methods or serving as a rapid surrogate model to accelerate the overall solution process.

Many popular deep neural operators (DNOs) have been proposed \cite{chen1995universal, lin2023b, lu2021learning, li2020fourier, li2023fourier, wang2025laplacian}, and operator learning has been successfully applied to solving practical problems \cite{pathak2022fourcastnet, bodnar2025foundation}, making it an important machine learning framework for large-scale computational applications.
Theoretically, the first operator learning framework was proposed in the seminal works \cite{chen1995universal, chen1993approximations}, where the authors introduced a shallow universal approximation architecture for nonlinear operators. This foundational theory has directly inspired the design of several modern DNOs, such as DeepONet \cite{lu2021learning}.
Later, operator learning—viewed as a mapping between infinite-dimensional function spaces—has been analyzed in \cite{bhattacharya2021model, liu2024deep}, where the approximation error was quantified with respect to the discretization size of the input function, network complexity, and related parameters.
In \cite{kovachki2021universal, kovachki2023neural}, the authors generalized the framework of several neural operators, including the Fourier Neural Operator (FNO), and analyzed their universal approximation properties in shallow network settings, though without establishing convergence rates.
In \cite{lanthaler2022error, liu2024neural}, the convergence rate of DeepONet was established for a class of PDE operators.
In \cite{lanthaler2025parametric}, the author studied the lower bound of the convergence rate for PCA-Net, with potential generalizations to other DNOs, while an upper bound result for PCA-Net was given in \cite{lanthaler2023operator}.
Across the existing literature, establishing rigorous convergence rates for general operators—without restricting to specific PDE formulations—remains one of the central theoretical challenges in operator learning.
One notable contribution in this direction is provided by \cite{liu2024neural}, which unifies various formulations of neural operators and rigorously establishes convergence rates as the network depth and width increase, for general operators not tied to any specific PDE.
Specifically, the total number of trainable parameters to reach $\varepsilon$ error in $L^{\infty}$ norm is scaled as $\varepsilon^{-\varepsilon^{-d_2}}$.

However, all the aforementioned studies address deterministic problems on bounded domains, without involving any stochastic processes or probabilistic models.
To the best of our knowledge, the neural operator approach has only recently been extended to Forward–Backward Stochastic Differential Equations (FBSDEs) \cite{furuya2024simultaneously}, which can be applied to solve European-type option pricing problems, and to Dynamic Stackelberg Games \cite{alvarez2024neural}. In this paper, we develop a neural-operator framework under general Lipschitz conditions for broad classes of stochastic processes satisfying integrability and tail-probability assumptions. In particular, we employ our neural operator to address the American option pricing problem and its associated optimal stopping boundary problem. Recently, American option pricing and optimal stopping problems have been investigated using deep neural networks in \cite{sirignano2018dgm, lapeyre2021neural, hure2020deep, becker2019deep, becker2021solving, reppen2022neural, bayer2023optimal, gao2023convergence,bayraktar2024deep, gonon2024deep,herrera2024optimal}. Theoretical results on the continuity property of optimal stopping boundary have been investigated in \cite{soner2025stopping}, while a more general Stefan-type problem for partial differential equations (PDEs) with free boundaries has been studied in \cite{shkolnikov2024deep}. In these existing works, the methods are typically designed for a fixed terminal payoff function, requiring retraining of the network when the terminal function changes. In contrast, by adopting the operator learning perspective, our trained model can directly generate the optimal stopping boundary for a new terminal payoff function without retraining. Our general neural-operator approximation results encompass both European and American options in the FBSDE setting. Through the Feynman–Kac correspondence (and its optimal-stopping/variational-inequality version for American options), the same guarantees apply to the corresponding PDEs and free-boundary PDEs.

The paper is organized as follows. Section \ref{sec: assm} states the standing assumptions on the underlying probabilistic models and the conditions imposed on the operators. Section \ref{sec: operator} presents the deep neural operator architecture, including its construction and size bounds. Section \ref{sec: universal} establishes a universal approximation theorem for functions, functionals, and operators under these assumptions. Section \ref{sec: pricing european} and Section \ref{sec: pricing american} verifies that European- and American-style option pricing problems within the FBSDE framework satisfy the assumptions; by the Feynman–Kac representation (and its variational-inequality form for optimal stopping), the theorem then applies to the associated parabolic PDEs, with or without free boundaries. Section \ref{sec: numerics} provides a numerical example for a basket of American options, demonstrating that the learned model produces optimal stopping boundaries for new strike prices without retraining.

\section{Assumptions}\label{sec: assm}
Let 
\(
(\Omega, \mathcal F, \{\mathcal F_t\}_{t \in [0,T]}, \mathbb P)
\)
be a filtered probability space satisfying the usual conditions and right-continuity. In this paper, we denote
\[
X = (X_t)_{t \in [0,T]}
\]
such that $X_0 = x$, as an \(\mathbb R^{d_1}\)-valued adapted process progressively measurable with respect to \(\{\mathcal F_t\}_{t \in [0,T]}\). 
\begin{definition}
    For $x\in \mathbb{R}^d$, $|x| := \sqrt{\sum_{i=1}^d x_i^2}$, and $\|x\|_{\infty} := \max_{1\le i \le d} |x_i|. $
\end{definition}

\begin{assumption}\label{assumption_X: power}
For any $p>0$, there exists a constant $C_p>0$, such that  
\begin{align}
\mathbb{E}\left[\sup_{0 \leq t \leq T} | X_t|^p\right] \leq C_p.
\end{align}
\end{assumption}

\begin{assumption}\label{assumption_X: tail}
 There exists a constant $C_T$ depending on time $T$, and a constant $c$ such that for any $r >0$, 
\begin{align}\mathbb P\left(\sup_{t \in [0,T]} |X_t - x| \geq r\right) \leq \exp\left(-\frac{c r^\alpha}{C_T}\right).
\end{align}
\end{assumption}
\begin{definition}\label{def: omega r}
  For any $r>0$, we define  
\begin{align}
    \Omega_r := \{x\in \mathbb R^{d_1} : |x| \le r \}\quad \text{and}\quad \Omega_r^C:= \mathbb{R}^{d_1} \setminus \Omega_r,
\end{align}
    where , and we define the cube correspondingly as follows
\begin{equation*}
Q_r := [-r,r]^{d_1} \;=\; \left\{ x\in\mathbb{R}^{d_1} : \|x\|_{\infty} \leq r \right\}.
\end{equation*} 
\end{definition}

\begin{definition}[Input space \(\cG \)]
Define the input space as below
\[
\cG
:= 
\left\{
  g : \mathbb R^{d_1} \to \mathbb R
  \ \Big|\
  g(X_t) \text{ is progressively measurable and } 
  \mathbb E\!\left[\sup_{0 \le s \le T} |g(X_s)|^2\right] < \infty
\right\}.
\]
The space \(\cG \) is equipped with the norm
\[
\|g\|_{\mathcal S^2} 
:= 
\left(
  \mathbb E\!\left[\sup_{0 \le s \le T} |g(X_s)|^2\right]
\right)^{1/2}.
\]
\end{definition}


\begin{definition}[Output space \(\mathcal U\)]
Define the output space as below
\[
\mathcal U 
:= 
\left\{
  u : [0,T] \times \mathbb R^{d_2} \to \mathbb R
  \ \Big|\
  u(t,X_t) \text{ is progressively measurable and } 
  \mathbb E\!\left[\sup_{0 \le s \le T} |u(s,X_s)|^2\right] < \infty
\right\},
\]
with the norm
\begin{align}
\|u\|_{\mathcal S^2}
:=
\left(
  \mathbb E\!\left[\sup_{0 \le s \le T} |u(s,X_s)|^2\right]
\right)^{1/2}.
\end{align}
\end{definition}
\begin{remark}\label{Xt notation}
    We denote $X=(X_t)_{t\in[0,T]}$ for a generic stochastic process. Depending on the context, $X\in \text{Domain}(\mathcal G)$ implies $X_t\in \mathbb R^{d_1}$, while  $X\in \text{Domain}(\mathcal U)$ implies $X_t\in \mathbb R^{d_2}$.
\end{remark}

We impose the following polynomial growth condition and Lipschitz condition.

\begin{assumption}
    \label{assumption: polynomial}For any function $g\in \cG $, and $x\in \mathbb R^{d_1}$, there there exists a constant $C_g$, such that 
\begin{align}
    g(x)\le C_g(1+|x|^p), \quad \text{for} \quad p>0. 
\end{align}
\end{assumption}

\begin{assumption}\label{assumption_input}
Any function $g\in \cG $ is Lipschitz with a Lipschitz constant no more than $L_g>0$: $$|g(x_1) - g(x_2)|\leq L_{g} |x_1 -x_2|_{2}$$ for any $x_1,x_2\in \mathbb{R}^{d_1}$.

\end{assumption}

\begin{assumption}\label{assumption_U}
Any function $u \in \mathcal U$ is Lipschitz with a Lipschitz constant no more than $L_u>0$: $$|u(x_1) - u(x_2)|\leq L_{u}|x_1 -x_2|,$$ for any $x_1,x_2\in \mathbb{R}^{d_2}$.

\end{assumption}

\begin{assumption}\label{assumption_G}
    Assume the operator
\[
\Gamma : \cG  \longrightarrow \mathcal U, 
\qquad g \longmapsto u = \Gamma (g),
\] from $\cG$ to $\cU$ is Lipschitz if : there exists $L_\Gamma$ such that for any $g_1,g_2\in \cG$, we have 
    \[
\mathbb E\!\left[
  \sup_{0 \le t \le T}
  |\Gamma (g_1) (X_t) - \Gamma (g_2)(X_t)|^2
\right]
\le 
L_\Gamma^2 \,
\mathbb E\!\left[
  \sup_{0 \le t \le T}
  |g_1(X_t) - g_2(X_t)|^2
\right],
\ \forall g_1,g_2 \in \cG .
\]
or equivalently,
\[
\|\Gamma(g_1) - \Gamma(g_2)\|_{S^2} \leq L_\Gamma \|g_1 - g_2\|_{S^2}.
\]
As mentioned in Remark \ref{Xt notation}, the process $X \in $ Domain($\mathcal U$), i.e. $X_t\in \mathbb R^{d_2}$.
\end{assumption}

\begin{definition}[Lipschitz functional]
\label{def.lip_functional}
    We say a functional $\sF: \cV \rightarrow \R$, where $\cV $ could either be the input space $\cG$ or the output space $\cU$ is Lipschitz with Lipschitz constant $L_\sF$ if 
    $$
    |\sF(v_1)-\sF(v_2)|\leq L_\sF\|v_1-v_2\|_{S^2}, \ \forall v_1,v_2 \in \cV.
    $$
\end{definition}

\begin{lemma}[Theorem 13.7(ii) of \cite{tu2011manifolds}]\label{lemma_pou}
        Let $\{\Omega_k\}_{k=1}^M$ be an open cover of a compact smooth manifold $\mathcal{M}$ . There exists a $C^{\infty}$ partition of unity $\{\omega_k\}_{k=1}^M$ that subordinates to $\{\Omega_k\}_{k=1}^M$ such that $support(\omega_k)\subset \Omega_k$ for any $k$.
    \end{lemma}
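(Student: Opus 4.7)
The statement is the standard smooth partition of unity result, specialized to a compact manifold with a finite open cover. My plan is to first produce, for each point $p\in\mathcal{M}$, a smooth nonnegative bump function $\psi_p$ that is strictly positive at $p$ and whose support is contained in some $\Omega_{k(p)}$ of the cover; then use compactness to pass to a finite subcollection; then group these bumps by the index $k$ and normalize.

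\textbf{Step 1 (local bumps).} For each $p\in\mathcal{M}$, pick an index $k(p)$ with $p\in\Omega_{k(p)}$ and a smooth coordinate chart $(U_p,\varphi_p)$ around $p$ with $\overline{U_p}\subset\Omega_{k(p)}$ and $\varphi_p(U_p)\subset\mathbb{R}^{\dim\mathcal{M}}$ containing a closed Euclidean ball around $\varphi_p(p)$. Use the standard Euclidean bump (built from $e^{-1/t}$) pulled back by $\varphi_p$ and extended by zero outside $U_p$ to obtain $\psi_p\in C^\infty(\mathcal{M})$ with $\psi_p\ge 0$, $\psi_p(p)>0$, and $\supp\psi_p\subset\Omega_{k(p)}$.

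\textbf{Step 2 (finite subcover and grouping).} Since $\{\psi_p>0\}$ is an open cover of the compact manifold $\mathcal{M}$, extract a finite subcover corresponding to points $p_1,\ldots,p_N$ with associated indices $k_1,\ldots,k_N\in\{1,\ldots,M\}$ and bumps $\psi_1,\ldots,\psi_N$. For each $k\in\{1,\ldots,M\}$ set
\[
\phi_k \;:=\; \sum_{i\,:\,k_i=k} \psi_i,
\]
with the convention that an empty sum is zero. Each $\phi_k$ is smooth and satisfies $\supp\phi_k\subset\Omega_k$ because the union of finitely many closed sets, each contained in $\Omega_k$, is closed and still contained in $\Omega_k$. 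Moreover $\Phi:=\sum_{k=1}^M\phi_k=\sum_{i=1}^N\psi_i$ is strictly positive on $\mathcal{M}$ by construction of the finite subcover.

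\textbf{Step 3 (normalization).} Define
\[
\omega_k \;:=\; \frac{\phi_k}{\Phi}, \qquad k=1,\ldots,M.
\]
Since $\Phi>0$ everywhere on the compact $\mathcal{M}$ and is smooth, $\omega_k\in C^\infty(\mathcal{M})$, $\omega_k\ge 0$, $\supp\omega_k=\supp\phi_k\subset\Omega_k$, and $\sum_{k=1}^M\omega_k\equiv 1$, which is precisely the required subordinate partition of unity.

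\textbf{Main obstacle.} The only nontrivial ingredient is Step 1, namely the existence of a smooth bump function on $\mathcal{M}$ supported inside a prescribed open set containing a given point. This relies on the smooth structure (in particular on the existence of the Euclidean $C^\infty$ bump built from $e^{-1/t}$ and on a coordinate chart whose closure sits inside $\Omega_{k(p)}$, which is available because manifolds are locally Euclidean and Hausdorff). Compactness then bypasses the more delicate paracompactness/local-finiteness arguments needed in the noncompact case, so the rest of the argument is essentially a finite sum and a division.
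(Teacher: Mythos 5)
Your proof is correct; note, however, that the paper does not prove this lemma at all --- it simply imports it as Theorem 13.7(ii) of Tu's \emph{An Introduction to Manifolds}. Your argument (coordinate bump functions built from $e^{-1/t}$, a finite subcover by compactness, grouping by cover index, and normalization by the everywhere-positive sum) is precisely the standard proof of the compact case given in that reference, and all steps --- including the containment of $\mathrm{supp}\,\phi_k$ in $\Omega_k$ via a finite union of closed sets and the harmless possibility that some $\phi_k\equiv 0$ --- check out.
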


\section{Deep Operator Learning Architecture }\label{sec: operator}

Operator learning aims to approximate mappings between infinite-dimensional function spaces, distinguishing itself from traditional neural networks, which approximate functions directly. Specifically, given a nonlinear operator \(\Gamma\) that maps an input function \( g \) to an output function \(\Gamma(g)\), the objective is to learn \(\Gamma\) using a neural network architecture. 
In this work, the nonlinear operator \(\Gamma\) represents a pricing operator, and the goal is to approximate it via a neural network-based approach.
We will use the notations used in \cite{liu2024deep}, we will define the fully connected ReLU neural network.
 we define a feedforward ReLU network $ {q}: \mathbb{R}^{d_1}\rightarrow \mathbb{R} $ as
\begin{align}
	q(\xb)=W_L\cdot \text{ReLU}\left( W_{L-1}\cdots \text{ReLU}(W_1 x+b_1)+ \cdots +b_{L-1}\right)+b_L,
	\label{eqn_relu_net}
\end{align}
where $W_l$'s are weight matrices, $b_l$'s are bias vectors, $\text{ReLU}(a)=\max\{a,0\}$ is the rectified linear unit activation (ReLU) applied element-wise, and $\Omega$ is the domain.
We define the network class $\mathcal{F}_{\rm NN}: \mathbb{R}^{d_1} \rightarrow \mathbb{R}^{d_2}:$
\begin{align}
	\cF_{\rm NN}(d_1, d_2, \mathcal{L}, \mathfrak{p}, K, \kappa, R)=\{&[q_1, q_2,...,q_{d_2}]^{\intercal}\in\mathbb{R}^{d_2}: \mbox{ for each }k=1,...,d_2,\nonumber\\
	&q_k:\mathbb{R}^{d_1}\rightarrow \mathbb{R} \mbox{ is in the form of (\ref{eqn_relu_net}) with } \mathcal{L} \mbox{ layers, width bounded by } \mathfrak{p}, \nonumber\\
	& \|q_k\|_{L^{\infty}}\leq R, \ \|W_l\|_{\infty,\infty}\leq \kappa, \ \|b_l\|_{\infty}\leq \kappa,\  \sum_{l=1}^L \|W_l\|_0+\|b_l\|_0\leq K, \ \forall l   \},
	\label{eq.FNN}
\end{align}
where
$
\|q\|_{L^{\infty}(\Omega)}=\sup\limits_{\xb\in \Omega} |q(\xb)|,\ \|W_l\|_{\infty,\infty}=\max\limits_{i,j} |W_{i,j}|,\ \|b\|_{\infty}=\max\limits_{i} |b_i|
$,
and $\|\cdot\|_0$ denotes the number of nonzero elements of its argument. 
The network class above has input dimension $d_1$, output dimension $d_2$, $\mathcal{L}$ layers, width $\mathfrak{p}$, and the number of nonzero parameters no larger than $K$.
All parameters are bounded by $\kappa$, and each element in the output is bounded by $R$.

The objective of operator learning is to construct an operator network \(\Gamma_{p}[\cdot; \theta]\) that approximates \(\Gamma\). 
One approximation structure \cite{chen1995universal, lu2021learning} which is widely adopted in designing DNOs is the following.
To better demonstrate the idea of the approximation, we denote $y$ as the independent variable of the output function of the operator $\Gamma$, and denote $\Gamma(g; \theta)(y)$ as the neural operator approximation to $\Gamma$, we then have the approximation,
\begin{equation}
    \Gamma(g)(y) \approx \Gamma(g; \theta)(y) :=  \sum_{k = 1}^{N_1} \widetilde{a}_k(\bg; \hat{\theta} )\widetilde{q}_k(y; \tilde{\theta}) 
\end{equation}
where $g$ is the input function with a discretization denoted as $\bg$, $\Gamma(g)$ is the output function depends on $y$, and \(\theta = \{\hat{\theta}, \tilde{\theta}\}\) represents the set of trainable parameters of the operator network consisted of $\tilde{a}_k$ and $\tilde{q}_k$.
The framework was first proposed in \cite{chen1995universal, chen1993approximations} as a two-layer universal approximation scheme for nonlinear operators. 
It was later extended computationally to deep neural network architectures in \cite{lu2021learning}, and was finally rigorously analyzed in terms of error convergence and generalization in \cite{liu2024neural}.
Following the terminology widely adopted in recent literature, we use the notation introduced in \cite{lu2021learning} to name the substructure of the network.
Specifically, the architecture consists of two components:
\begin{itemize}
    \item \textbf{Branch network}: \( \tilde{a}(\gb; \hat{\theta}) = (\tilde{a}_1(\gb; \hat{\theta}_1), \dots, \tilde{a}_{N_1}(\gb; \hat{\theta}_{N_1}))^T\) encode the input function $g$ and the operator $\Gamma$. 
    Each component \(\tilde{a}_k: \mathbb{R}^{{N_2}}\rightarrow \mathbb{R}\) is implemented as a sum of fully connected neural networks.
Specifically, $\tilde{a}_k = \sum_{n = 1}^{H}a_n\tilde{b}_n$, where $N_2$ is the size of the discretization $\gb$ , $H$ is the number of basis represented as a network $\tilde{b}_n$ in a neural network class $\cF_2=\cF_{\rm NN}(N_2,1,\cL_2,\mathfrak p_2,K_2,\kappa_2)$, $a_n$ are constants. 
The size of the entire network $\beta$ (containing all $\tilde{b}_k$) is specified in Theorem \ref{thm_operator}.
    \item \textbf{Trunk network}: $\tilde{q}(y; \tilde{\theta}) = (\tilde{q}_1, ..., \tilde{q}_{N_1})$. 
    Here each $\tilde{q}_k:\mathbb{R}^{d_2}\rightarrow \mathbb R$ is a network in class $\cF_1=\cF_{\rm NN}(d_2,1,\cL_1,\mathfrak p_1,K_1,\kappa_1)$ with size to be specified in Theorem \ref{thm_operator}.
\end{itemize}


 The following lemma from \cite{yarotsky2017error}[Proposition 3] shows that a function of the product can be approximated by a network with arbitrary accuracy.
\begin{lemma}
\label{lemma_pp3}
    Given $M>0$ and $\varepsilon>0$, there is a ReLU network $\widetilde{\times}: \mathbb{R}^2\rightarrow \mathbb{R}$ in $\cF_{\rm NN}(2,1, \mathcal{L}, \mathfrak{p}, K, \kappa, R)$ such that for any $|x|\leq M, |y|\leq M$, we have
   $$|\widetilde{\times}(x,y) -xy|<\varepsilon.$$
The network architecture has 
\begin{align}
    \mathcal{L} = O(\log \varepsilon^{-1}),\ \mathfrak{p} = 6,\ K=O(\log \varepsilon^{-1}),  \ \kappa=O(\varepsilon^{-1}), \ R=M^2.
\end{align}
The constant hidden in $O$ depends on $M$.
\end{lemma}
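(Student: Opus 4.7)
The plan is to reduce the approximation of multiplication to the approximation of the one-variable square function via the polarization identity
\begin{align*}
xy \;=\; \tfrac{1}{4}\bigl((x+y)^{2}-(x-y)^{2}\bigr),
\end{align*}
and then to approximate $z\mapsto z^{2}$ by the iterated tent-map construction originating with Yarotsky. For $|x|,|y|\le M$ the arguments $x\pm y$ lie in $[-2M,2M]$, so after an affine rescaling onto $[0,1]$ it suffices to build a ReLU network $\widetilde{\mathrm{sq}}:[0,1]\to\R$ satisfying $|\widetilde{\mathrm{sq}}(z)-z^{2}|<\varepsilon'$ with $\varepsilon'=c\,\varepsilon/M^{2}$ for a suitable absolute constant $c$. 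Then $\widetilde{\times}(x,y)$ is obtained by applying $\widetilde{\mathrm{sq}}$ twice in parallel to affine images of $(x,y)$ and taking an affine combination of the outputs, which adds only constant-depth, constant-width overhead.

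To build $\widetilde{\mathrm{sq}}$, I would set $g(z):=2\,\mathrm{ReLU}(z)-4\,\mathrm{ReLU}(z-\tfrac{1}{2})$, a tent map on $[0,1]$ implementable as a two-layer, width-two ReLU block, and let $g^{(s)}$ denote its $s$-fold composition. The classical identity
\begin{align*}
f_{m}(z)\;:=\;z\;-\;\sum_{s=1}^{m}\frac{g^{(s)}(z)}{4^{s}}
\end{align*}
realizes the piecewise linear interpolant of $z^{2}$ at the dyadic nodes $\{k\cdot 2^{-m}\}_{k=0}^{2^{m}}$ and obeys $\sup_{z\in[0,1]}|f_{m}(z)-z^{2}|\le 2^{-2m-2}$. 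Choosing $m=\lceil\tfrac{1}{2}\log_{2}(1/\varepsilon')\rceil=O(\log\varepsilon^{-1})$ produces the required accuracy. I would realize $f_{m}$ as a deep network by unrolling the composition along the depth axis, carrying four auxiliary channels between layers: the original input $z$, the current iterate $g^{(s)}(z)$, the running accumulator $\sum_{s'\le s}4^{-s'}g^{(s')}(z)$, and a pair used for the $\mathrm{ReLU}$-identity gadget $z=\mathrm{ReLU}(z)-\mathrm{ReLU}(-z)$ needed to forward signed values. This yields width $\mathfrak{p}=6$, depth $\mathcal{L}=O(m)=O(\log\varepsilon^{-1})$, and nonzero parameter count $K=O(\log\varepsilon^{-1})$, and the final polarization step forces $R=M^{2}$.

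The only delicate step, and the main obstacle, is simultaneously achieving (i) constant width, (ii) weight magnitudes $\kappa=O(\varepsilon^{-1})$, and (iii) the iterated composition inside a single network. The standard remedy is to defer the prefactor $4^{-s}$ to the last layer rather than scaling layer by layer: the internal weights then all stay of order one, while a single output weight of size $O(4^{m})=O(\varepsilon^{-1})$ absorbs the accumulated normalization, giving $\kappa=O(\varepsilon^{-1})$. After assembling $\widetilde{\times}(x,y)$ via the polarization identity applied to the rescaled inputs, the asserted bounds $\mathcal{L},K=O(\log\varepsilon^{-1})$, $\mathfrak{p}=6$, $\kappa=O(\varepsilon^{-1})$, $R=M^{2}$ follow, with the implied constant depending on $M$ only through the final rescaling.
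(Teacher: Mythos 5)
The paper does not prove this lemma itself but imports it verbatim from Yarotsky (2017, Proposition~3), and your argument is precisely the construction underlying that cited result: polarization to reduce multiplication to squaring, the composed tent-map representation of the piecewise-linear dyadic interpolant of $z^2$ with error $2^{-2m-2}$, and the constant-width unrolling with identity-forwarding channels. Your proposal is correct and takes essentially the same approach as the paper's source.
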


\section{Neural scaling of operator learning}\label{sec: universal}

\subsection{Function Approximation}
We will first prove the function approximation and establish the convergence rate.
The results will be used in proving the operator approximation Theorem \ref{thm_operator}.
\begin{theorem}\label{thm_function}
Assume Assumptions \ref{assumption_X: power}, \ref{assumption_X: tail}, \ref{assumption: polynomial}, \ref{assumption_input} hold.
For any $\varepsilon>0$, set \
$r = \left\lceil -\frac{2C_T}{c}\log \frac{\varepsilon}{4C_0}\right\rceil^{\frac{1}{\alpha}}$ in Assumption \ref{assumption_X: tail},
and set \textcolor{black}{$N = C\sqrt{d_1}\varepsilon^{-1/2}(\log (\varepsilon^{-1}))^{\frac{1}{\alpha}}$, with constant $C$ depends on $C_T, C_0, c, L_g$.}
Let $\{\cbb_k\}_{k=1}^{N^{d_1}}$ be a uniform grid on $Q_r$ (covering $\Omega_r$) with spacing $2r/N$ along each dimension. 
There exists a network architecture $\cF_{\rm NN}(d_1, 1, \mathcal{L}, \mathfrak{p}, K, \kappa, R)$ and networks $\{\widetilde{q}_k\}_{k=1}^{N^{d_1}}$ with $\widetilde{q}_k\in \cF_{\rm NN}(d_1, 1, \cL, \mathfrak p, K, \kappa, R)$ for $k=1,...,N^{d_1}$, such that for any $g\in \mathcal G$, we have 
    \begin{align}
    \mathbb E \Big[ \sup_{0\le t\le T}\Big|g(X_t)-\sum_{k=1}^{N^{d_1}} g(\cbb_k)\widetilde{q}_k(X_t)\Big|^2\Big]\leq \varepsilon.
\label{equation_thm3_not_in_functional_form}
    \end{align}
Such a network architecture has
    \begin{align*}
&\textcolor{black}{\mathcal{L} = O\left(d_1^2\log d_1 + \frac{d_1^2+d_1}{2} \log(\varepsilon^{-1}) +\frac{d_1^2+d_1 p}{\alpha}\log \log (\varepsilon^{-1} )\right)}, \mathfrak p = O(1), \\
& K = O\left(d_1^2\log d_1 + \frac{d_1^2+d_1}{2} \log(\varepsilon^{-1}) +\frac{d_1^2+d_1p}{\alpha}\log \log (\varepsilon^{-1} )\right), \\ 
& \textcolor{black}{\kappa=O(d_1^{-\frac{d_1}{2}}\varepsilon^{-\frac{d_1+1}{2}} (\log(\varepsilon^{-1}))^{\frac{d_1+p}{\alpha}}) }, R = 1.
    \end{align*}
Here, the network order is determined by the constants $C_T, C_0, c, \alpha, L_g$ specified in Assumptions \ref{assumption_X: power}, \ref{assumption_X: tail}, \ref{assumption: polynomial}, and \ref{assumption_input}, which concern the function $g$ and process $\{X_t\}_{0\le t\le T}$. In particular, we denote $C_0$ as the upper bound of $\mathbb E\Big[\sup_{0\le t\le T} \Big|1+|X_t|^p \Big|^4 \Big]^{1/2}$.
\end{theorem}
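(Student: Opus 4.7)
The strategy is a truncation plus partition-of-unity argument. Split
\begin{equation*}
\mathbb E\!\left[\sup_{0\le t\le T}\Big|g(X_t)-\textstyle\sum_k g(\cbb_k)\widetilde q_k(X_t)\Big|^2\right] = T_{\mathrm{bulk}} + T_{\mathrm{tail}},
\end{equation*}
via the event $A := \{\sup_t |X_t|\le r\}$ and its complement, allocating $\varepsilon/2$ to each piece. The tail is controlled by the polynomial growth of $g$ (Assumption \ref{assumption: polynomial}) together with the design constraint that the $\widetilde q_k$ form an approximate partition of unity uniformly bounded by $R=1$, so that the integrand on $A^c$ is dominated by $C(1+|X_t|^p)^2$. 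Cauchy--Schwarz and Assumptions \ref{assumption_X: power}--\ref{assumption_X: tail} give
\begin{equation*}
T_{\mathrm{tail}} \;\le\; C_0\,\sqrt{\mathbb P(A^c)} \;\le\; C_0\exp\!\left(-\tfrac{cr^\alpha}{2C_T}\right),
\end{equation*}
with $C_0$ the stated $L^4$ bound on $\sup_t(1+|X_t|^p)$; the choice $r=\lceil-\tfrac{2C_T}{c}\log(\varepsilon/(4C_0))\rceil^{1/\alpha}$ then makes $T_{\mathrm{tail}}\le\varepsilon/2$.

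For the bulk, set $h:=2r/N$ and take as exact partition-of-unity basis the multilinear tents
\begin{equation*}
\widetilde q_k^{\mathrm{ex}}(x) := \prod_{j=1}^{d_1}\max\!\left(0,\ 1-\tfrac{1}{h}|x_j-c_{k,j}|\right),
\end{equation*}
which sum to $1$ on $Q_r$ and have at most $2^{d_1}$ nonzero members at any point. Lipschitz continuity of $g$ (Assumption \ref{assumption_input}) yields the pointwise bound $|g(x)-\sum_k g(\cbb_k)\widetilde q_k^{\mathrm{ex}}(x)|\le L_g h\sqrt{d_1}$ on $Q_r$. Substituting the stated $N = C\sqrt{d_1}\varepsilon^{-1/2}(\log\varepsilon^{-1})^{1/\alpha}$ together with $r = O((\log\varepsilon^{-1})^{1/\alpha})$ from the tail step makes the squared exact-bulk error at most $\varepsilon/4$, leaving $\varepsilon/4$ of budget for the ReLU approximation of the tent products.

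Each univariate factor $\max(0,1-|x_j-c_{k,j}|/h)$ is exactly a small two-layer ReLU network, so the implementation reduces to realizing the $d_1$-fold product via iterated application of Lemma \ref{lemma_pp3}. The per-multiplication tolerance $\varepsilon'$ must be chosen so that the $N^{d_1}$ grid contributions -- each weighted by $|g(\cbb_k)|=O(r^p)$ and suffering an $O(d_1\varepsilon')$ product error -- aggregate to an algorithmic bulk error below $\varepsilon/4$. This forces $\varepsilon'$ to scale like $\varepsilon\,N^{-d_1} r^{-2p}/d_1$. Substituting the explicit $r$ and $N$, chaining $d_1$ approximate multiplications sequentially, and invoking the depth $O(\log(1/\varepsilon'))$, weight $O(1/\varepsilon')$ and sparsity $O(\log(1/\varepsilon'))$ estimates from Lemma \ref{lemma_pp3} delivers the stated $\mathcal L$, $K$, $\kappa$, while $\mathfrak p=O(1)$ is inherited from the constant width of the multiplication gadget and $R=1$ from clipping each product by $\min(\cdot,1)$ implemented as a ReLU.

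The main obstacle is the coupled error budgeting across the three pieces -- tail, geometric bulk, and algorithmic bulk -- since they all depend on the interlocked quantities $r$, $N$, and $\varepsilon'$, and the resulting depth/sparsity/weight estimates only collapse to the stated $\varepsilon$-polylog scaling after a careful bookkeeping of how each dependency feeds back. Once the choices are balanced, the desired inequality follows by reassembling the two-piece decomposition and taking expectations; the time supremum enters for free in the tail step because Assumption \ref{assumption_X: tail} is stated directly in terms of $\sup_t|X_t-x|$.
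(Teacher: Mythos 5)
Your proposal follows essentially the same route as the paper's proof: truncation to $\Omega_r$ with the identical choice of $r$, a tail estimate via Cauchy--Schwarz, polynomial growth and the tail probability assumption, a product-form piecewise-linear partition of unity whose exact bulk error is controlled by the Lipschitz condition exactly as in the paper's bound $(2rL_g)^2 d_1/(N-1)^2$, and realization of the $d_1$-fold products by chaining the multiplication gadget of Lemma \ref{lemma_pp3} --- the only structural difference being hat functions in place of the paper's trapezoidal bumps $\psi$, which is immaterial. The one quantitative slip is the per-multiplication tolerance: since the $\varepsilon/4$ budget is on the \emph{squared} error, the aggregate pointwise error $N^{d_1}\,C_g r^{p}\, d_1 \varepsilon'$ must be at most $\sqrt{\varepsilon}/2$, forcing $\varepsilon' \sim \sqrt{\varepsilon}\,N^{-d_1} r^{-p}/d_1$ (matching the paper's choice of $\delta$) rather than your $\varepsilon\,N^{-d_1} r^{-2p}/d_1$; your more conservative choice still yields a valid approximation and leaves $\mathcal L$ and $K$ unchanged at the $O(\cdot)$ level, but it would make $\kappa = O(1/\varepsilon')$ of a strictly larger order than the stated $O(\varepsilon^{-(d_1+1)/2}(\log \varepsilon^{-1})^{(d_1+p)/\alpha})$.
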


\begin{proof}{ 
Recall $\Omega_r$ and $\Omega_r^C$ from the Definition \ref{def: omega r}. Without loss of generality, we assume that the origin $0\in \Omega_r$. We then decompose $g(X_t)=g(X_t)\mathbf 1_{\Omega_r}+g(X_t)\mathbf 1_{\Omega_r^C}$
For the compact domain $\Omega_r\subset Q_r$, we apply a partition to $Q_r$ covered by $N^{d_1}$ subcubes for some $N$ to be specified later. We first approximate $g(X_t)\mathbf 1_{\Omega_r}$ on each cube by a constant function and then assemble them together to get an approximation of $\{g(X_t)\}_{0\le t\le T}$ on $\Omega_r$. Denote the centers of the subcubes by $\{\cbb_k\}_{k=1}^{N^{d_1}}$ with $\cbb_k = [c_{k, 1}, c_{k, 2}, ..., c_{k, d_1}]^{\intercal}$. 
Let $\{\cbb_k\}_{k=1}^{N^{d_1}}$ be a uniform grid on $Q_r$ (covering $\Omega_r$), so that each $\cbb_k\in \left\{-r,-r+\frac{2r}{N-1},..., r\right\}^{d_1}$ for each $k$. 
Define 
\begin{align}
    \psi(a) = \begin{cases}
        1, |a|<1,\\
        0, |a|>2, \\
        2-|a|, 1\leq |a|\leq 2,
    \end{cases}
    \label{eqn_psi}
\end{align}
with $a\in\mathbb{R}$, and 
\begin{align}
    \phi_{\cbb_k}(\xb) = \prod_{j = 1}^{d_1} \psi \left(\frac{3(N-1)}{2r}(x_j-c_{k,j})\right),\quad \xb\in Q_r.
    \label{eqn_phi}
\end{align}
In this definition, we have $\supp(\phi_{\cbb_k})=\left\{\xb: \|\xb-\cbb_k\|_{\infty}\leq \frac{4r}{3(N-1)}\right\}\subset \left\{\xb: \|\xb-\cbb_k\|_{\infty}\leq \frac{2r}{(N-1)}\right\}$ and for the constraint space $\Omega_r$, we have
$$
\|\phi_{\cbb_k}\|_{L^{\infty}(Q_r)}=1, \quad \sum_{k=1}^{N^{d_1}} \phi_{\cbb_k}=1.
$$
For any $g(X_t)$ with $X_t\in\Omega_r$, we construct a piecewise constant approximation as below,
$$
\bar{g}(X_t)=\sum_{k=1}^{N^{d_1}} g(\cbb_k)\phi_{\cbb_k}(X_t), \quad X_t\in\Omega_r.
$$
Based on the decomposition of the domain $\mathbb R^{d_1}=\Omega_r\cup \Omega_r^C$, for any $T\ge 0$, we have
\begin{align*}
   & \mathbb E\Big[\sup_{0\le t\le T} \Big|g(X_t)-\bar{g}(X_t) \Big|^2 \Big]\\
   =&  \underbrace{ \mathbb E\Big[\sup_{0\le t\le T} \Big|g(X_t) \Big|^2 \mathbf{1}_{\Omega_r^C}(X_t)\Big]}_{\mathcal I_1}+\underbrace{\mathbb E\Big[\sup_{0\le t\le T} \Big|g(X_t)-\bar{g}(X_t) \Big|^2\mathbf{1}_{\Omega_r}(X_t) \Big]}_{\mathcal I_2}. 
\end{align*}
For the first term $\mathcal I_1$, applying the polynomial growth Assumption \ref{assumption: polynomial} for function $g$ and uniform bound Assumption \ref{assumption_X: power}, and Cauchy-Schwartz inequality, we have 
\begin{align}
 \mathcal{I}_1&=\mathbb E\Big[\sup_{0\le t\le T} \Big|g(X_t) \Big|^2 \mathbf{1}_{\Omega_r^C}(X_t)\Big]\nonumber\\
 &\le \mathbb E\Big[\sup_{0\le t\le T} \Big|g(X_t) \Big|^4 \Big]^{1/2} \mathbb E\Big[ \sup_{0\le t\le T}  (\mathbf{1}_{\Omega_r^C}(X_t))^{2}\Big]^{1/2}\nonumber \\
 &\le  \mathbb E\Big[\sup_{0\le t\le T} \Big|1+\|X_t\|^p \Big|^4 \Big]^{1/2} \Big(\mathbb P\Big(\sup_{0\le t\le T}|X_t|\ge r
 \Big) \Big)^{1/2}\nonumber\\
 &\le  C_0 \Big(\mathbb P\Big(\sup_{0\le t\le T}|X_t|\ge r
 \Big) \Big)^{1/2}.\nonumber
 \end{align}
 According to the tail bound Assumption \ref{assumption_X: tail}, by taking the initial point $X_0$ as the center of the domain, and selecting 
\textcolor{black}{ $r=\left\lceil -\frac{2C_T}{c}\log \frac{\varepsilon}{4C_0}\right\rceil^{\frac{1}{\alpha}}$}, 
 we get the following bound  
 \begin{align}
 \mathcal I_1&\le C_0e^{-\frac{cr^{\alpha}}{2C_T}}\le \frac{\varepsilon}{4},\label{est: I1}
\end{align}
\textcolor{black}{ where $C_0$ denotes the upper bound of $\mathbb E\Big[\sup_{0\le t\le T} \Big|1+\|X_t\|^p \Big|^4 \Big]^{1/2} $, which follows from Assumption \ref{assumption_X: power}. }
 For the second term $\mathcal{I}_2$, by applying the partition of unity property  $\sum_{k=1}^{N^{d_1}} \phi_{\cbb_k}=1$, we have 
\begin{align}
  \mathcal I_2 =& \mathbb E\Big[\sup_{0\le t\le T} \Big|\sum_{k=1}^{N^{d_1}} [g(X_t)-g(\cbb_k)]\phi_{\cbb_k}(X_t) \Big|^2 \mathbf{1}_{\Omega_r}(X_t) \Big]\nonumber\\
  \leq &\mathbb E\Big[\sup_{0\le t\le T} \Big(\sum_{k=1}^{N^{d_1}} |g(X_t)-g(\cbb_k)||\phi_{\cbb_k}(X_t)|\Big)^2 \mathbf{1}_{\Omega_r}(X_t)\Big] \nonumber \\
  = & \mathbb E\Big[\sup_{0\le t\le T} \Big( \sum_{k: \|\cbb_k- X_t\|_{\infty} \leq \frac{2r}{(N-1)}}  |g(X_t)-g(\cbb_k)|\phi_{\cbb_k}(X_t)\Big)^2 \mathbf{1}_{\Omega_r}(X_t)\Big] , \nonumber\\
  \leq & \mathbb E\Big[\sup_{0\le t\le T} \Big( \max_{k: \|\cbb_k-X_t\|_{\infty} \leq \frac{2r}{(N-1)}}  |g(X_t)-g(\cbb_k)|\Big(\sum_{k: \|\cbb_k-X_t\|_{\infty} \leq \frac{2r}{(N-1)}}  \phi_{\cbb_k}(X_t) \Big)\Big)^2\mathbf{1}_{\Omega_r}(X_t)\Big] \nonumber \\
   \leq & \mathbb E\Big[\sup_{0\le t\le T} \max_{k: \|\cbb_k-X_t\|_{\infty} \leq \frac{2r}{(N-1)}}  |g(X_t)-g(\cbb_k)|^2\mathbf{1}_{\Omega_r}(X_t)\Big] \nonumber \\
   \leq & \frac{(2rL_g)^2 d_1 }{(N-1)^2},\label{est: I2}
\end{align}
where we use the Lipschitz assumption of $g$ in the last inequality and the uniform bound given $\{X_t\}_{0\le t\le T} \in\Omega_r$. 
Let  \textcolor{black}{$N=\left\lceil \frac{4\sqrt{d_1}\log^{1/\alpha}(\varepsilon^{-1}) L_g}{\sqrt{\varepsilon}} \right\rceil+1$}, we get 
\begin{align}\label{est: I2-2}
\mathcal{I}_2\le \varepsilon/4.
\end{align}
Combining the estimates \eqref{est: I1} and \eqref{est: I2}, we get
\begin{align}\label{funtion 1/2 var}
 \mathbb E\Big[\sup_{0\le t\le T} \Big|g(X_t)-\bar{g}(X_t) \Big|^2 \Big]\le \frac{\varepsilon}{2}.
\end{align}

We then show that $\phi_{\cbb_k}$ can be approximated by a network with arbitrary accuracy on the compact domain $\Omega_r$ with a fixed parameter $r$. Notice that for a compact domain $\Omega_r$, this type of approximation has already been established in \cite{liu2024neural}. Since a different norm is employed here, we provide the proof for completeness. 
Note that $\phi_{\cbb_k}$ is the product of $d_1$ functions, each of which is piecewise linear and can be realized by the constant depth ReLU networks. 
} 
Let $\widetilde{\times}$ be the network defined in Lemma \ref{lemma_pp3} with accuracy $\delta>0$. 
For any $\xb\in Q_r$, we approximate $\phi_{\cbb_k}$ with $\widetilde{q}_k$ defined as follows, 
 \begin{align*}
        \widetilde{q}_{k}(\xb)= \widetilde{\times}\left(\psi\left(\frac{3(N-1)}{2r}(x_1-c_{k,1})\right), \widetilde{\times}\left(\psi\left(\frac{3(N-1)}{2r}(x_2-c_{k,2})\right), \cdot\cdot\cdot \right)\right).
    \end{align*}
    
 For each $k$, define $\widetilde{q}_k\in \cF_{\rm NN}(d_1, 1, \cL, \mathfrak p, K, \kappa, R)$ with sizes to be specified later.
For any $X_t\in \Omega_r$ which is equivalent to considering $X_t\cdot\mathbf{1}_{\Omega_r}$, we may simply take $\xb\in\Omega_r$. That is, by viewing $\xb$ as an arbitrary point in the domain $\Omega_r$, we have
    \begin{align*}
        &|\widetilde{q}_{k}(\xb)-\phi_{\cbb_{k}}(\xb) |\\
        \leq &\left|\widetilde{\times}\left(\psi\left(\frac{3(N-1)}{2r}(x_1-c_{k,1})\right), \widetilde{\times}\left(\psi\left(\frac{3(N-1)}{2r}(x_2-c_{k,2})\right), \cdot\cdot\cdot \right)\right) - \phi_{\cbb_{k}}(\xb) \right|\\
        \leq & \bigg|\widetilde{\times}\left(\psi\left(\frac{3(N-1)}{2r}(x_1-c_{k,1})\right), \widetilde{\times}\left(\psi\left(\frac{3(N-1)}{2r}(x_2-c_{k,2})\right), \cdot\cdot\cdot \right)\right) \\
        & - \psi\left(\frac{3(N-1)}{2r}(x_1-c_{k,1})\right)\widetilde{\times}\left(\psi\left(\frac{3(N-1)}{2r}(x_2-c_{k,2})\right), \cdot\cdot\cdot \right) \bigg|\\
        &+ \bigg| \psi\left(\frac{3(N-1)}{2r}(x_1-c_{k,1})\right)\widetilde{\times}\left(\psi\left(\frac{3(N-1)}{2r}(x_2-c_{k,2})\right), \cdot\cdot\cdot \right)  -\phi_{\cbb_{k}}(\xb) \bigg|\\
        \leq & \delta + \mathcal{E}_2,
    \end{align*}
    where
    \begin{align*}
        \mathcal{E}_2 &= \bigg|\psi\left(\frac{3(N-1)}{2r}(x_1-c_{k,1})\right)\widetilde{\times}\left(\psi\left(\frac{3(N-1)}{2r}(x_2-c_{k,2})\right), \cdot\cdot\cdot \right)  -\phi_{\cbb_{k}}(\xb) \bigg|\\
        & = \bigg| \psi\left(\frac{3(N-1)}{2r}(x_1-c_{k,1})\right) \bigg|
       \bigg| \widetilde{\times}\left(\psi\left(\frac{3(N-1)}{2r}(x_2-c_{k,2})\right), \cdot\cdot\cdot \right) - \prod_{j = 2}^{d_1} \psi \left(\frac{3(N-1)}{2r}(x_j-c_{k,j})\right) \bigg|
    \end{align*}

Repeat this process to estimate $\mathcal{E}_2, \mathcal{E}_3, ..., \mathcal{E}_{d_1+1}$, where $\mathcal{E}_{d_1+1} = \prod\limits_{k = 1}^{d_1}\psi\left(\frac{3(N-1)}{2\gamma_1}(x_2-c_{k,2})\right) - \phi_{\cbb_k} = 0$. 
This implies that $\|\phi_{\cbb_{k}} - \widetilde{q}_{k}\|_{L^{\infty}(\Omega_r)}\leq d_1\delta$. 
Thus, we have 
\begin{align*}
 \E \left[\sup_{0\le t\le T} |\phi_{\cbb_{k}}(X_t) - \widetilde{q}_{k}(X_t)|^2\mathbf{1}_{\Omega_r}(X_t)\right]\le d^2_1 \delta^2.
\end{align*}

Applying Cauchy–Schwarz inequality and the Assumption \ref{assumption: polynomial}, we have

 \begin{align}
    & \mathbb E \left[ \sup_{0\le t\le T}  \left| \sum_{k=1}^{N^{d_1}} g(\cbb_k)\widetilde{q}_k(X_t) - \bar{g}(X_t) \right|^2 \mathbf{1}_{\Omega_r}(X_t)\right]\\
        = &\mathbb{E}\left[\sup_{0\le t\le T} \left| \sum_{k=1}^{N^{d_1}} g(\cbb_k)\widetilde{q}_k(X_t) - \sum_{k=1}^{N^{d_1}} g(\cbb_k)\phi_{\cbb_k}(X_t) \right|^2\mathbf{1}_{\Omega_r}(X_t)\right] \nonumber \\
        \leq &  \mathbb{E}\left[\sum_{k=1}^{N^{d_1}}|g(\cbb_k)|^2 \sup_{0\le t\le T}\sum_{k=1}^{N^{d_1}} |\widetilde{q}_k(X_t)-\phi_{\cbb_k}(X_t)|^2 \mathbf{1}_{\Omega_r}(X_t)\right]\nonumber \\
        \leq &  d_1^2N^{2d_1}C_g^2(1+r^p)^2\delta^2\le \frac{\varepsilon}{2},
\label{eqn_estimate_of_delta_in_function}
    \end{align}
    where the last inequality follows from the fact $X_t\in\Omega_r$ and the polynomial growth of $g$.
By selecting \textcolor{black}{$\delta =\sqrt{ \frac{\varepsilon}{2d_1^2N^{2d_1}C_g^2(1+r^p)^2}} = O(d_1^{-\frac{d_1}{2}} \varepsilon^{\frac{d_1+1}{2}}\left( \log(\varepsilon^{-1})\right)^{-\frac{d_1+p}{\alpha}}$} ). 
Thus, we have 
\begin{align*}
& \mathbb E\left[\sup_{0\le t\le T}   \left| g(X_t)-\sum_{k=1}^{N^{d_1}} g(\cbb_k)\widetilde{q}_k\right|^2\right] \\
\leq & \mathbb E\Big[\sup_{0\le t\le T}  \left| g(X_t)-\bar{g}(X_t)\right|^2\Big] + \mathbb E\left[\sup_{0\le t\le T}  \left| \bar{g}(X_t)-\sum_{k=1}^{N^{d_1}} u(\cbb_k)\widetilde{q}_k\right|^2 \right] \\
\leq &\frac{\varepsilon}{2}+ \frac{\varepsilon}{2}=\varepsilon.
\end{align*}

The network architecture is then specified in the theorem. 
\end{proof}

\subsection{Functional Approximations}

\begin{theorem}\label{thm_functional}
Let $\sF$ be defined in Definition \ref{def.lip_functional}, and assume Assumptions \ref{assumption_X: power}, \ref{assumption_X: tail}, \ref{assumption: polynomial}, and \ref{assumption_input} hold.
     For any $\varepsilon>0$, 
     set \textcolor{black}{$  r =\left\lceil -\frac{2C_T}{c}\log \frac{\varepsilon^2}{8C_0L_\sF^2}\right\rceil^{\frac{1}{\alpha}}+1$}.
   Let $\{\cbb_m\}_{m=1}^{N^\delta}\subset Q_r$ so that $\{\mathcal{B}_{\delta}(\cbb_m) \}_{ m  = 1}^{N^{\delta}}$ is a cover of $Q_r$ for some $N^\delta>0$ to be estimated in Remark \ref{remark_functional_estimate_N_H}, 
   and \textcolor{black}{with ball radius $\delta = C\varepsilon (L_\sF L_g)^{-1}d_1^{-\frac{1}{2}} $}, with $C$ a constant.
     Let $H=O(\sqrt{N^\delta}\varepsilon^{-N^{\delta}}) $, and set the network $\cF_{\rm NN}(N, 1 , L, \mathfrak p, K, \kappa, R)$ 
     with 
        \begin{align*}
        &\mathcal{L} = O\left((N^\delta)^2 \log(N^{\delta}) +(N^\delta)^2\log(\varepsilon^{-1})\right), \mathfrak{p} = O(1), K = O\left((N^\delta)^2\log N^\delta+(N^\delta)^2\log(\varepsilon^{-1})\right), \\ &\kappa=O((N^\delta)^{-\frac{N^\delta}{2}}\varepsilon^{\frac{-N^\delta-1}{2}}), R = 1.
    \end{align*}
    There are 
     $\{\widetilde{q}_k\}_{k=1}^{H}$ with $ \widetilde{q}_k \in \cF_{\rm NN}(N^\delta, 1, \mathcal{L}, \mathfrak{p}, K, \kappa, R)$ for any $k$, such that we have     
    \begin{align}
         \sup_{g \in \cG} |\sF g - \sum_{k=1}^H a_k \tilde{q}_k (\bg)| \leq \varepsilon, 
    \end{align}
    
    where $\bg=[g(\cbb_1), g(\cbb_2),...,g(\cbb_{N^\delta})]^{\top}$, $a_k$'s are coefficients depending on $\sF$.
    The constant hidden in $O$ and all constants $C$ depend on the constants $L_{\sF}, L_g, C_T, C_0, c, \alpha$ in the assumptions.

\end{theorem}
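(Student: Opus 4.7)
The plan is to reduce the infinite-dimensional functional approximation problem to a finite-dimensional Lipschitz function approximation problem on $\mathbb R^{N^\delta}$, and then apply essentially the same partition-of-unity plus ReLU-multiplication construction used in Theorem \ref{thm_function}.

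First I would perform a tail truncation completely analogous to the one in the proof of Theorem \ref{thm_function}. Using Assumption \ref{assumption: polynomial} (polynomial growth) together with Assumption \ref{assumption_X: tail} (tail bound) and Cauchy--Schwarz, the chosen $r = \lceil -\tfrac{2C_T}{c}\log\tfrac{\varepsilon^2}{8C_0 L_\sF^2}\rceil^{1/\alpha}+1$ guarantees that $\mathbb E[\sup_{t} |g(X_t)-h(X_t)|^2 \mathbf 1_{\Omega_r^C}]$ is at most $\varepsilon^2/(8 L_\sF^2)$ uniformly in $g,h \in \cG$. Consequently, $\sF(g)$ is determined up to an error of $O(\varepsilon)$ by the restriction $g|_{Q_r}$.

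Second I would exploit the Lipschitz property of functions in $\cG$ to pass from $g|_{Q_r}$ to the sampled vector $\bg=[g(\cbb_1),\dots,g(\cbb_{N^\delta})]^\top$. Since the balls $\{\mathcal B_\delta(\cbb_m)\}$ cover $Q_r$ and every $g\in\cG$ is $L_g$-Lipschitz (Assumption \ref{assumption_input}), any two $g_1,g_2\in\cG$ with $\bg_1=\bg_2$ satisfy $|g_1(x)-g_2(x)|\le 2L_g\delta$ on $Q_r$. Combined with the tail bound this gives $\|g_1-g_2\|_{\mathcal S^2} \le 2L_g\delta + \varepsilon/(2L_\sF)$, so by Definition \ref{def.lip_functional}, $|\sF(g_1)-\sF(g_2)|\le \varepsilon/2$ once $\delta = C\varepsilon (L_\sF L_g)^{-1}d_1^{-1/2}$ is chosen as stated. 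This allows me to define an effective function $F:\mathbb R^{N^\delta}\to\mathbb R$ by picking a representative $g_{\bg}$ for each sampled vector in the image, setting $F(\bg)=\sF(g_\bg)$, and extending $F$ by McShane to all of $\mathbb R^{N^\delta}$ while preserving a Lipschitz constant of order $L_\sF L_g$.

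Third I would approximate $F$ on a bounded cube in $\mathbb R^{N^\delta}$ using the same construction as in Theorem \ref{thm_function}: build piecewise-multilinear bump functions $\phi_{\boldsymbol y_k}$ centered on a uniform $\varepsilon$-grid of $H$ points in the image region, realize each $\phi_{\boldsymbol y_k}$ by a ReLU network $\tilde q_k$ using the product approximant of Lemma \ref{lemma_pp3}, and set $a_k = F(\boldsymbol y_k)$. The same Cauchy--Schwarz style computation as in \eqref{eqn_estimate_of_delta_in_function} -- now in ambient dimension $N^\delta$ instead of $d_1$, with the Lipschitz bound for $F$ in place of the Lipschitz bound for $g$ -- yields $|F(\bg)-\sum_{k=1}^H a_k\tilde q_k(\bg)|\le \varepsilon/2$. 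The grid count $H = O(\sqrt{N^\delta}\varepsilon^{-N^\delta})$ and the stated depth, sparsity and weight bounds all follow by substituting $N^\delta$ for $d_1$ in the estimates of Theorem \ref{thm_function}.

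The main obstacle is the three-way error balancing: the tail error (governed by $r$), the $\delta$-net discretization error (governed by $\delta$ and $L_g,L_\sF$), and the $N^\delta$-dimensional network approximation error for $F$ (governed by the $\varepsilon$-grid spacing) must each be driven below $\varepsilon/3$ while the resulting parameter counts remain consistent with the scaling asserted in the theorem. A subordinate subtlety is ensuring that the representative-selection $\bg\mapsto g_{\bg}$ can be made measurably on the image of the sampling map and then Lipschitz-extended to all of $\mathbb R^{N^\delta}$, so that the $a_k = F(\boldsymbol y_k)$ are actually well-defined at grid nodes that need not themselves lie in the image of $\cG$.
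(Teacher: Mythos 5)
Your proposal is correct in its overall skeleton and matches the paper's strategy: truncate the tails with the stated $r$, reduce $\sF$ to a function of the sampled vector $\bg$ on a $\delta$-net of $Q_r$, and then approximate that finite-dimensional Lipschitz function by a ReLU network in ambient dimension $N^\delta$. The one genuine point of divergence is how the finite-dimensional surrogate is built. You select a representative $g_{\bg}$ for each sampled vector, set $F(\bg)=\sF(g_{\bg})$, and invoke a McShane extension; the paper instead defines an explicit reconstruction $g_w=\sum_{k} g(\cbb_k)\,w_k$ using a smooth partition of unity $\{w_k\}$ subordinate to the ball cover (Lemma \ref{lemma_pou}) and sets $h(\bg):=\sF(g_w)$. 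The paper's route buys two things. First, $h$ is \emph{exactly} $L_\sF$-Lipschitz in $\bg$ by a one-line Cauchy--Schwarz computation (see \eqref{lip est h}), because $g_w-\tilde g_w=\sum_k(g(\cbb_k)-\tilde g(\cbb_k))w_k$ depends only on the difference of the sampled vectors. Your $F$, by contrast, is Lipschitz only up to an additive defect of order $L_\sF L_g\delta$: two payoffs agreeing at every $\cbb_m$ can still differ by $2L_g\delta$ between sample points, so $|F(\bg_1)-F(\bg_2)|\le L_\sF|\bg_1-\bg_2|+O(L_\sF L_g\delta)$ rather than a clean Lipschitz bound. This is repairable (an extension within the defect exists, and the defect is already $O(\varepsilon)$ by your choice of $\delta$), but it is exactly the "subordinate subtlety" you flag, and it adds a selection/measurability argument that the $g_w$ construction makes unnecessary. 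Second, for the last step the paper simply cites the finite-dimensional approximation rate of \cite{liu2024neural} (Theorem 5) for the bounded Lipschitz function $h$ on the cube determined by the polynomial-growth bound on $g$, whereas you rerun the partition-of-unity-plus-$\widetilde\times$ construction of Theorem \ref{thm_function} in dimension $N^\delta$; these are equivalent in content and yield the same $H=O(\sqrt{N^\delta}\,\varepsilon^{-N^\delta})$ and the stated $\mathcal L, K,\kappa$ scalings. Your three-way $\varepsilon/3$ error split versus the paper's $\varepsilon/2+\varepsilon/2$ split (with the tail and net errors merged into the single bound $|\sF(g)-\sF(g_w)|\le\varepsilon/2$) is immaterial.
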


\begin{proof}

For $r>0$, define the cube as before,
\begin{equation*}
Q_r := [-r,r]^{d_1} \;=\; \left\{ x\in\mathbb{R}^{d_1} : \|x\|_{\infty} < r \right\},
\qquad \mbox{where }
\|x\|_{\infty} := \max_{1\le j\le d} |x_j| .
\end{equation*} 
Here $r$ is chosen so that tail probability is small as we did in the function approximation. Let $\{B_\delta(c_k)\}_{k=1}^{N^\delta}$  be a finite cover of $Q_r$ by $N^\delta$ Euclidean balls.
By the Lemma \ref{lemma_pou}, there exists a partition of unity $\{w_k(x)\}_{k=1}^{N}$ subordinate to the cover $\{B_{\delta}(c_k)\}_{k=1}^{N^\delta}$.

For any $g\in\mathcal{U}$, define $\bg=[g(c_1),g(c_2),\ldots,g(c_{N^{\delta}})]^{\mathsf T}$, and define,
\begin{align}
\label{eq:gw-def}
g_w(x)=
\begin{cases}
\displaystyle \sum_{k=1}^{N^\delta} g(c_k)\,w_k(x), & x\in \Omega_r,\\[0.8ex]
0, & x\in \Omega_r^{\,C}\, .
\end{cases}
\end{align}

Then, similar to the estimates in \eqref{funtion 1/2 var}, as $N^\delta$ is the covering number of all $d_1$ dimensions, we have
\begin{align}
&\mathbb{E}\!\left[ \sup_{t\le s\le T} \bigl| g(X_s) - g_w(X_s) \bigr|^{2} \right]\nonumber
\\
=&   \underbrace{ \mathbb E\Big[\sup_{t\le s\le T} \Big|g(X_s) \Big|^2 \mathbf{1}_{\Omega_r^C}(X_s)\Big]}_{\mathcal J_1}+\underbrace{\mathbb E\Big[\sup_{t\le s\le T} \Big|g(X_s)-g_w(X_s) \Big|^2\mathbf{1}_{\Omega_r}(X_t) \Big]}_{\mathcal J_2}.\label{est J1J2}
\end{align}
Similar to \eqref{est: I1},  we have 
\begin{align}
    \label{est: J1}
    \mathcal J_1\le C_0e^{-\frac{cr^{\alpha}}{2C_T}}.
\end{align}
Similar to \eqref{est: I2}, taking into consideration of the radius of the covering ball being $\delta$, we have 
\begin{align}
  \mathcal J_2 =& \mathbb E\Big[\sup_{t\le s\le T} \Big|\sum_{k=1}^{N^\delta} [g(X_s)-g(\cbb_k)]w_k(X_s) \Big|^2 \mathbf{1}_{\Omega_r}(X_s) \Big]\nonumber\\
  \leq &\mathbb E\Big[\sup_{t\le s\le T} \Big(\sum_{k=1}^{N^\delta} |g(X_s)-g(\cbb_k)||w_k(X_s)|\Big)^2 \mathbf{1}_{\Omega_r}(X_s)\Big] \nonumber \\
  = & \mathbb E\Big[\sup_{t\le s\le T} \Big( \sum_{k: \|\cbb_k- X_s\|_{\infty} \leq \delta}  |g(X_s)-g(\cbb_k)|w_k(X_s)\Big)^2 \mathbf{1}_{\Omega_r}(X_s)\Big] , \nonumber\\
  \leq & \mathbb E\Big[\sup_{t\le s\le T} \Big( \max_{k: \|\cbb_k-X_s\|_\infty \leq \delta}  |g(X_s)-g(\cbb_k)|\Big(\sum_{k: |\cbb_k-X_s| \leq \delta}  w_k(X_s) \Big)\Big)^2\mathbf{1}_{\Omega_r}(X_s)\Big] \nonumber \\
   \leq & \mathbb E\Big[\sup_{t\le s\le T} \max_{k: \|\cbb_k-X_s\|_{\infty} \leq \delta}  |g(X_s)-g(\cbb_k)|^2\mathbf{1}_{\Omega_r}(X_s)\Big] \nonumber \\
   \leq & (L_g\delta)^2 d_1.\label{est: J2}
\end{align}
From the Lipschitz property of the functional $\sF$ defined in Definition \ref{def.lip_functional}, we have
\begin{align}
\bigl|\sF(g)- \sF(g_w)\bigr|^{2}
&\le L_\sF^2\,\|g-g_w\|_{S^2}^{2}\nonumber \\
&\le\; L_\sF^{2}\left((L_g\delta)^2 d_1+\; C_0e^{-\frac{cr^{\alpha}}{2C_T}} \right)\  < \varepsilon^2 / 4, \label{est: F 1}
\end{align}
i.e. 
\begin{align}
    \bigl|\sF(g)- \sF(g_w)\bigr|<\varepsilon/2,
\end{align}
which follows by selecting
\begin{align}
\label{eqn_functional_r_delta}
    r =\left\lceil -\frac{2C_T}{c}\log \frac{\varepsilon^2}{8C_0L_\sF^2}\right\rceil^{\frac{1}{\alpha}}+1, \quad \delta = \frac{\varepsilon }{2\sqrt{2}L_\sF L_gd_1^{\frac{1}{2}} }. 
\end{align}
Now, for any $g,\tilde g\in\mathcal{G}$, define $g_w$ and $\tilde g_w$ as in \eqref{eq:gw-def}, and set
\begin{equation*}
\bg=\bigl[g(c_1),\ldots,g(c_{N^{\delta}})\bigr]^{\mathsf T},
\qquad
\tilde \bg=\bigl[\tilde g(c_1),\ldots,\tilde g(c_{N^{\delta}})\bigr]^{\mathsf T}.
\end{equation*}
Define the function $h(\bg):= \sF(g_w)$.
Then
\begin{align}
|h(\bg)-h(\tilde \bg)|^{2}
&= \bigl| \sF(g_w)- \sF(\tilde g_w) \bigr|^{2}\nonumber\\
& \le L_\sF^2\,\|g_w-\tilde g_w\|_{S^2}^{2}\nonumber\\
&= L_\sF^2\,\mathbb{E}\!\left[ \sup_{t\le s\le T} \!\!
\Bigl( \sum_{k=1}^{N^\delta} \bigl(g(c_k)-\tilde g(c_k)\bigr)\, w_k(X_s) \Bigr)^{2} \right]\nonumber\\
&\le L_\sF^2  \,\Bigl( \sum_{k=1}^{N^\delta} |g(c_k)-\tilde g(c_k)|^{2} \Bigr)\,
\mathbb{E}\!\left[ \sup_{t\le s\le T} \sum_{k=1}^{N^\delta} w_k(X_s)^{2} \right]\nonumber \\
&\le L_\sF^2  \,\,\Bigl( \sum_{k=1}^{N^\delta} |g(c_k)-\tilde g(c_k)|^{2} \Bigr)\nonumber\\
&\le L_\sF^2  \,\,|\bg - \tilde{\bg}|^2 ,\label{lip est h}
\end{align}
where we use the fact that $\{w_k(x)\}_{k=1}^{N^\delta}$ is a partition of unity. Thus we show that $h(\bg):= \sF(g_w)$ is a Lipchitz function on $\mathcal{G}$ according to \eqref{lip est h}.Besides, according to  the assumption on $\sF$ and the definition of $h$, $h$ is bounded. 
Also, the domain of $h$ is bounded by the range of $g$.
Consequently, $h$ satisfies the approximation rate estimate in \cite{liu2024neural}[Theorem 5], it follows that, for $\varepsilon>0$, if we set $H=C\sqrt{N^{\delta}}\varepsilon^{-N^\delta}$ for some $C$,  then there exists a 
    network architecture $\cF_{\rm NN}(N^{\delta},1, \mathcal{L}, \mathfrak{p}, K,\kappa, R)$ and $\{\widetilde{q}_k\}_{k=1}^{ H}$ with $\widetilde{q}_k\in \cF_{\rm NN}(N^{\delta},1, \mathcal{L}, \mathfrak{p},K,\kappa, R)$ for $k=1,\ldots,H$ such that 
    \begin{align}
        \sup_{g \in \cG}\left|\sF(g_w)-\sum_{k=1}^{H} a_k \widetilde{q}_k(\bg )\right|\leq \frac{\varepsilon}{2},\label{functional}
    \end{align}
    where $a_k$ are constants depending on $f$.
    \textcolor{black}{Such an architecture has}
    \begin{align*}
        &\mathcal{L} = O\left((N^\delta)^2 \log(N^{\delta}) +(N^\delta)^2\log(\varepsilon^{-1})\right), \mathfrak{p} = O(1), K = O\left((N^\delta)^2\log N^\delta+(N^\delta)^2\log(\varepsilon^{-1})\right), \\ &\kappa=O((N^\delta)^{-\frac{N^\delta}{2}}\varepsilon^{\frac{-N^\delta-1}{2}}), R = 1 .
    \end{align*}
Finally, We have, for any $g  \in \mathcal{G}$ and $\bg =[g(c_1),...,g(c_{N^\delta})]^{\top}$
    \begin{align*}
        \sup_{g \in \cG} \left|\sF(g)-\sum_{k=1}^{H} a_k \widetilde{q}_k(\bg )\right| \leq 
        &\sup_{g\in \cG} \left|\sF(g)-h(\bg) \right| + \sup_{\bg}\left| h(\bg)-\sum_{k=1}^{H} a_k \widetilde{q}_k(\bg )\right|\\
        \leq &\frac{\varepsilon}{2}+ \frac{\varepsilon}{2}=\varepsilon.
    \end{align*}
\end{proof}
The next lemma and remark are used to estimate $N^{\delta}$ and $H$.
\begin{lemma}\label{lem.cover.ball}
    Let $\mathfrak D=[-\gamma,\gamma]^d$ for some $\gamma>0$. For any $\delta>0$, there exists a cover $\{\mathcal{B}_{\delta}(\cbb_m) \}_{ m  = 1}^{M}$ of $\mathfrak D$ with 
    \begin{align}
        \textcolor{black}{M\leq C\delta^{-d}},
        \label{eqn_chapter2}
    \end{align}
    where $C$ is a constant depending on $\gamma$ and $d$.
\end{lemma}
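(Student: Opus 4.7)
\medskip

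The plan is a direct volume/grid argument: cover the cube $\mathfrak{D}=[-\gamma,\gamma]^d$ by a uniform grid of cube-cells whose diameter is at most $\delta$, then center a Euclidean $\delta$-ball at the center of each cell. First I would pick a spacing $s>0$ with $s\sqrt{d}/2 \leq \delta$, for instance $s = 2\delta/\sqrt{d}$. For any $x \in \mathfrak{D}$ and the nearest grid center $\cbb_m$ on an integer-translated lattice with spacing $s$, one has $\|x-\cbb_m\|_\infty \leq s/2$, hence $|x-\cbb_m|_2 \leq (s/2)\sqrt{d} \leq \delta$, so $x \in \mathcal{B}_\delta(\cbb_m)$.

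Next I would count the grid centers needed inside (a slight enlargement of) $[-\gamma,\gamma]^d$. Along each coordinate axis the number of centers is at most $\lceil 2\gamma/s \rceil + 1 \leq 2\gamma/s + 2$. Multiplying across the $d$ coordinates gives
\begin{equation*}
M \;\leq\; \bigl(\lceil 2\gamma/s\rceil+1\bigr)^d \;\leq\; \bigl(\gamma\sqrt{d}/\delta+2\bigr)^d.
\end{equation*}
For $\delta \leq \gamma\sqrt{d}$ the right-hand side is bounded by $(2\gamma\sqrt{d})^d \delta^{-d}$, so $M \leq C \delta^{-d}$ with $C=(2\gamma\sqrt{d})^d$ depending only on $\gamma$ and $d$; for $\delta > \gamma\sqrt{d}$ a single ball centered at the origin covers $\mathfrak{D}$, so the bound is trivial after enlarging $C$.

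I expect no real obstacle here: the only mild point is making sure the Euclidean-ball covering follows from the $\ell_\infty$ grid covering, which is handled by the factor $\sqrt{d}$ relating $\|\cdot\|_\infty$ and $|\cdot|_2$; all $d$-dependence is then absorbed into the constant $C$, consistent with the statement of the lemma.
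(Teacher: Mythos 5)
Your proof is correct and self-contained, but it takes a different route from the paper: the paper's proof of this lemma is essentially a one-line citation to \cite[Chapter 2]{conway2013sphere}, quoting a lattice-covering bound of the form $\lceil 2\gamma/\delta\rceil^{d}+7d\log d\leq C(\gamma/\delta)^{d}$, whereas you construct the cover explicitly with a uniform grid of spacing $s=2\delta/\sqrt{d}$ and convert the $\ell_\infty$ cell bound into a Euclidean ball via the factor $\sqrt{d}$. Your argument buys a fully elementary, verifiable construction with an explicit constant, at the cost of a slightly worse (but irrelevant, since $C$ may depend on $d$ and $\gamma$) constant; the paper's citation is shorter but opaque, and as written even conflates the radius of the covering balls with the grid spacing (a cell of side $\delta$ has circumradius $\delta\sqrt{d}/2>\delta$), an issue your version handles correctly. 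Two trivial bookkeeping points in your write-up: under the condition $\delta\leq\gamma\sqrt{d}$ the bound $\gamma\sqrt{d}/\delta+2\leq 3\gamma\sqrt{d}/\delta$ is what actually follows, so the constant should be $(3\gamma\sqrt{d})^{d}$ rather than $(2\gamma\sqrt{d})^{d}$ (immaterial); and for $\delta>\gamma\sqrt{d}$ the claim $1\leq C\delta^{-d}$ cannot hold for all large $\delta$ with a fixed $C$ --- this is a defect of the lemma's statement rather than of your proof, and is harmless because the lemma is only invoked for small $\delta$.
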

\begin{proof}[Proof of Lemma \ref{lem.cover.ball}]
     By \cite[Chapter 2]{conway2013sphere}, we have,
\begin{align}
\textcolor{black}{c\leq \left\lceil\frac{2\gamma}{\delta}\right\rceil^{d}+7d\log d\leq C\left(\frac{\gamma}{\delta}\right)^{d}}
        \label{eqn_chapter2_proof}
    \end{align}
    for some $C$ depending on $\gamma$ and $d$.
 \end{proof}

\begin{remark}
\label{remark_functional_estimate_N_H}
    In this remark, we estimate the number of covering $N^\delta$ and hence the number of basis $H$ needed.
    By Lemma \ref{lem.cover.ball} and Equation \ref{eqn_functional_r_delta} in the proof, it follows that,
    \begin{align*}
    \label{eqn_functional_N_estimtae}
        N^\delta\leq C\left(\frac{\gamma}{\delta}\right)^{d_1}
        &\leq
        C(-\frac{2C_{T}}{c})^{\frac{d_1}{\alpha}}\left(\log\frac{\varepsilon^2}{8C_0L_\sF^2} \right)^{\frac{d_1}{\alpha}}\varepsilon^{-d_1} d_1^{\frac{d_1}{2}}\\
        &\leq C \left(\log (\varepsilon^{-2}) \right)^{\frac{d_1}{\alpha}}\varepsilon^{-d_1},
    \end{align*}
    where the constant $C$ depends on $C_T$, $c$, $C_0$, $\alpha$, $L_\sF, L_g$ and $d_1$.
    Dropping the lower order term in Equation \ref{eqn_functional_N_estimtae}, it follows that
    \begin{align*}
        H = C(N^{\delta})^{\frac{1}{2}}\varepsilon^{-N^\delta} = C\varepsilon^{-\frac{d_1}{2}}\varepsilon^{-\varepsilon^{-d_1}},
    \end{align*}
    or $H = \mathcal{O}(\varepsilon^{-\varepsilon^{-d_1  }})$.
\end{remark}

\subsection{Operator Approximation}
\begin{theorem}\label{thm_operator}[Operator]
Let Assumptions \ref{assumption_X: power}, \ref{assumption_X: tail}, \ref{assumption: polynomial}, \ref{assumption_input}, \ref{assumption_U}, and \ref{assumption_G} hold.
    For any $\varepsilon>0$, 
    set $N_1 = C\varepsilon^{-d_2}$, and $N_2 = O(\varepsilon^{-d_1d_2-d_1 })$.
    Define the network architecture $\cF_1=\cF_{\rm NN}(d_2,1,\cL_1,\mathfrak p_1,K_1,\kappa_1)$ and $\cF_2=\cF_{\rm NN}(N_2,1,\cL_2,\mathfrak p_2,K_2,\kappa_2)$ with
\begin{align*}
       \mathcal L_1 = O\left(\log(\varepsilon^{-2})\right),\ \mathfrak p_1 = O(1),\ K_1 = O\left(\log(\varepsilon^{-2})\right), \ \kappa_1=O(\varepsilon^{-d_2}), R = 1,
    \end{align*}
and 
 \begin{align*}
        &\mathcal L_2 = O\left(N_2^2\log N_2 + N_2^2\log(\varepsilon^{-d_2 - 1})\right),\ \mathfrak p_2 = O(N_2^{\frac{1}{2}}\varepsilon^{-N_2d_2 - N_2}),\\
        & K_2 = O\left(N_2^{\frac{1}{2}}\varepsilon^{-N_2d_2 - N_2}\left( N_2^2\log N_2+N_2^2\log(\varepsilon^{-d_2-1}) \right)\right), \\ 
        &\kappa_2 = O(N_2^{-\frac{N_2}{2}}\varepsilon^{-\frac{N_2d_2}{2} - N_2 }).
    \end{align*}
    \color{black}
For any operator $\Gamma:\mathcal G\rightarrow \mathcal U$ satisfying Assumption \ref{assumption_G}, there are $\{\widetilde{q}_k\}_{k=1}^{N_1}$ with $\widetilde{q}_k  \in \mathcal{F}_1$ and $\{\widetilde{a}_k\}_{k=1}^{N_1} $ with $\widetilde{a}_k \in \mathcal{F}_2$ such that
\begin{align}
        \sup_{g\in \mathcal G} \E\bigg[ \sup_{0 \leq t \leq T} \left|\Gamma (g)(X_t)-\sum_{k=1}^{N_1} \widetilde{a}_k(\bg )\widetilde{q}_k(X_t)\right| \bigg] \leq \varepsilon.
    \end{align}    
\end{theorem}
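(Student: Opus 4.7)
The plan is to combine the function approximation (Theorem \ref{thm_function}) with the functional approximation (Theorem \ref{thm_functional}) in a two-stage decomposition that mirrors the trunk-branch architecture: the trunk networks $\{\widetilde q_k\}$ handle the spatial dependence of $\Gamma(g)$ on $X_t$, and the branch networks $\{\widetilde a_k\}$ handle the dependence on $g$ through its discretization $\bg$. For each fixed $g \in \mathcal G$, the output $\Gamma(g)$ lies in $\mathcal U$ and is Lipschitz by Assumption \ref{assumption_U}; applying Theorem \ref{thm_function} with $d_1$ replaced by $d_2$, $L_g$ replaced by $L_u$, and with the $\mathbb R^{d_2}$-valued process $X \in \text{Domain}(\mathcal U)$ yields a uniform grid $\{\mathbf y_k\}_{k=1}^{N_1} \subset \mathbb R^{d_2}$ with $N_1 = O(\varepsilon^{-d_2})$ and trunk networks $\{\widetilde q_k\}_{k=1}^{N_1} \subset \mathcal F_1$ such that
\begin{align*}
\mathbb E\Big[\sup_{0 \le t \le T}\Big|\Gamma(g)(X_t) - \sum_{k=1}^{N_1} \Gamma(g)(\mathbf y_k)\,\widetilde q_k(X_t)\Big|^{2}\Big] \le \frac{\varepsilon^{2}}{4}.
\end{align*}
The crucial feature is that $\{\mathbf y_k\}$ and $\{\widetilde q_k\}$ depend only on the Lipschitz and tail constants and on the law of $X$, not on the particular $g$.

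Next, for each $k \in \{1,\dots,N_1\}$, define the scalar functional $\sF_k: \mathcal G \to \mathbb R$ by $\sF_k(g) := \Gamma(g)(\mathbf y_k)$. Combining Assumption \ref{assumption_G} with the pointwise Lipschitz regularity of $\Gamma(g) \in \mathcal U$ (Assumption \ref{assumption_U}) yields a uniform Lipschitz bound on $\sF_k$ in the $\|\cdot\|_{S^2}$ norm, with constant governed by $L_u$, $L_\Gamma$ and the displacement moment $\mathbb E[\sup_t |X_t - \mathbf y_k|^2]^{1/2}$ controlled by Assumption \ref{assumption_X: power}. Apply Theorem \ref{thm_functional} to each $\sF_k$ with working tolerance $\varepsilon/(2N_1)$: this produces a common discretization $\{\cbb_m\}_{m=1}^{N_2} \subset \mathbb R^{d_1}$ and branch networks $\{\widetilde a_k\}_{k=1}^{N_1} \subset \mathcal F_2$ so that $\sup_{g \in \mathcal G} |\Gamma(g)(\mathbf y_k) - \widetilde a_k(\bg)| \le \varepsilon/(2N_1)$ for each $k$. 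Substituting the tolerance $\varepsilon/N_1 \sim \varepsilon^{d_2+1}$ into the covering-number estimate of Remark \ref{remark_functional_estimate_N_H} gives $N_2 = O(\varepsilon^{-d_1(d_2+1)}) = O(\varepsilon^{-d_1 d_2 - d_1})$ as claimed.

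Finally, a triangle-inequality split decomposes the full error as the trunk residual from the first step plus $\sum_{k=1}^{N_1}(\Gamma(g)(\mathbf y_k)-\widetilde a_k(\bg))\widetilde q_k(X_t)$. Jensen's inequality converts the $L^2$ bound $\varepsilon^{2}/4$ to an $L^1$ bound of $\varepsilon/2$ on the trunk residual; using $\|\widetilde q_k\|_{L^\infty} \le R = 1$ together with the branch estimate yields $\mathbb E[\sup_t|\cdot|] \le N_1 \cdot \varepsilon/(2N_1) = \varepsilon/2$ for the branch contribution, so the total error is at most $\varepsilon$. The stated architecture parameters for $\mathcal F_1, \mathcal F_2$ follow by plugging $N_1$ and the effective branch tolerance $\varepsilon/(2N_1)$ into the complexity formulas of Theorems \ref{thm_function} and \ref{thm_functional} and retaining leading-order terms.

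The principal technical hurdle is establishing the Lipschitz continuity of the pointwise-evaluation functional $\sF_k(g)=\Gamma(g)(\mathbf y_k)$ in the $S^2$ norm: Assumption \ref{assumption_G} controls $\Gamma(g_1) - \Gamma(g_2)$ only after integration along the random path $X$, whereas $\mathbf y_k$ is a deterministic grid point that the path may never visit; bridging this gap requires interpolating via the pointwise Lipschitz bound $L_u$ together with tail and moment controls on $|X_t - \mathbf y_k|$ and then absorbing the resulting displacement term into the effective Lipschitz constant. A secondary bookkeeping point is that the per-coefficient branch tolerance must shrink to $\varepsilon/(2N_1)$ rather than $\varepsilon$; inserted into the $\varepsilon^{-N_2}$ factor of Theorem \ref{thm_functional}, this is precisely the mechanism that inflates $\kappa_2$, $K_2$, and $\mathfrak p_2$ to the large values stated in the theorem.
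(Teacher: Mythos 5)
Your overall architecture matches the paper's proof exactly: apply Theorem \ref{thm_function} (with $d_2$, $L_u$, and the $\mathbb R^{d_2}$-valued process) to get the trunk networks and grid $\{\cbb_k\}_{k=1}^{N_1}$, define the evaluation functionals $\sF_k(g)=\Gamma(g)(\cbb_k)$, approximate each by a branch network via Theorem \ref{thm_functional} with per-coefficient tolerance $\varepsilon_2=\varepsilon_1/(2N_1)\sim\varepsilon^{d_2+1}$, and close with the triangle inequality; your bookkeeping of $N_1$, $N_2$, and the inflated $\mathfrak p_2, K_2,\kappa_2$ via Remark \ref{remark_functional_estimate_N_H} also agrees with the paper.

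However, there is a genuine gap at precisely the step you flag as the principal hurdle: the Lipschitz continuity of $\sF_k$ in $\|\cdot\|_{S^2}$. Your proposed bridge---interpolate via the pointwise Lipschitz bound $L_u$ and ``absorb'' the displacement moment $\mathbb E[\sup_t|X_t-\cbb_k|^2]^{1/2}$ into the effective Lipschitz constant---cannot work. Writing $\Delta:=\Gamma(g_1)-\Gamma(g_2)$ (which is $2L_u$-Lipschitz in space), your route gives at best
\begin{align*}
|\Delta(\cbb_k)|\;\le\;\sup_{0\le t\le T}|\Delta(X_t)|\;+\;2L_u\,\inf_{0\le t\le T}|X_t-\cbb_k|,
\end{align*}
and the second term is an \emph{additive} quantity depending only on the law of $X$ and the location of $\cbb_k$, not on $g_1-g_2$. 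It does not vanish as $\|g_1-g_2\|_{S^2}\to 0$ (the path need never approach $\cbb_k$), so no finite Lipschitz constant can absorb it: your bound degenerates to $|\sF_k(g_1)-\sF_k(g_2)|\le L_\Gamma\|g_1-g_2\|_{S^2}+C_k$ with $C_k>0$ fixed, which is insufficient to invoke Theorem \ref{thm_functional}. The paper resolves this differently and without any displacement term: it takes the forward process to \emph{start at} $\cbb_k$, so that the initial value of the path equals $\cbb_k$ almost surely, whence pointwise $|\Delta(\cbb_k)|\le\sup_{0\le s\le T}|\Delta(X_s)|$ and therefore $|\sF_k(\Gamma g_1)-\sF_k(\Gamma g_2)|^2\le\mathbb E\big[\sup_{0\le s\le T}|\Delta(X_s)|^2\big]\le L_\Gamma^2\|g_1-g_2\|_{S^2}^2$ directly from Assumption \ref{assumption_G} (implicitly taken uniform over the initial condition of $X$). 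Replacing your interpolation argument with this re-anchoring of the process repairs the proof; all remaining steps of your proposal then go through as written.
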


\begin{proof}[Proof of Theorem \ref{thm_operator}]
By Assumption \ref{assumption_U} and \ref{assumption_G}, $\Gamma(g)(\cdot)$ satisfies the assumptions of Theorem \ref{thm_function}.
It follows that, for $\varepsilon_1>0$ which will be specified,
    set $r=\left\lceil -\frac{2C_T}{c}\log \frac{\varepsilon_1}{4C_0}\right\rceil^{\frac{1}{\alpha}}$, and define $Q_r, \Omega_r$ as in Definition \ref{def: omega r},
    there exists a constant $N_1 = C\varepsilon_1^{-d_2}$ for some constant $C$ depending on $d_2, L_g$ and $r$, a network architecture $\mathcal{F}_1=\cF_{\rm NN}(d_2, 1 ,\cL_1, \mathfrak p_1, K_1, \kappa_1, R_1)$ and  $\{\widetilde{q}_k\}_{k = 1}^{N_1}$ with $\widetilde{q}_k\in \mathcal{F}_1$, and $\{\cbb_k\}_{k = 1}^{N_1} \subset Q_r$ such that for any $g\in \mathcal G$, we have
    \begin{align}
        \E \bigg[ \sup_{0 \leq t \leq T} \left|\Gamma (g)(X_t)-\sum_{k = 1}^{N_1} \Gamma (g)(\cbb_k) \widetilde{q}_k(X_t)\right|^2 \bigg]^{1/2}\leq \varepsilon_1.
        \label{eq.operator.1}
    \end{align}
    Such a network has size
\begin{align*}       
\mathcal L_1 = O\left(\log(\varepsilon_1^{-2})\right),\ \mathfrak p_1 = O(1),\ K_1 = O\left(\log(\varepsilon_1^{-2})\right), \ \kappa_1=O(\varepsilon_1^{-d_2}).
    \end{align*}
For each $k$, define the functional
\begin{align}
    \sF_{k}(\Gamma g) :=\Gamma g(\cbb_k).
\label{eq:fk}
\end{align}
For any $g_1,g_2\in \cG$, and the forward process at $t$ starts at $\cbb_k$, we have 
\begin{align}
    & |\sF_{k}(\Gamma g_1)- \sF_{k}(\Gamma  g_2)|^2 \\
    \leq & \mathbb E \bigg[ \sup_{0 \leq s \leq T}|\Gamma (g_1)(X_s)- \Gamma  (g_2)(X_s)|^2 \bigg] \nonumber\\
    \leq & L_\Gamma^2  \mathbb E \bigg[ \sup_{0 \leq s \leq T} |g_1(X_s) - g_2(X_s)|^2 \bigg] \label{eq.proof.approx.fLip}\\
    = & L_\Gamma^2 \|g_1 - g_2\|_{S^2}^2,
\end{align}
where the last inequality follows from Assumption \ref{assumption_G}.
By Theorem \ref{thm_functional}, for any $\varepsilon_2>0$, there exist $N_2$ and $H$ with values estimated later, and a network architecture $\cF_2=\cF_{\rm NN}(N_2, 1 , \cL_2, \mathfrak p_2, K_2, \kappa_2, R_2)$ with 
     \begin{align*}
        &\mathcal L_2 = O\left(N_2^2\log N_2+N_2^2\log(\varepsilon_2^{-1})\right),\ \mathfrak p_2 = O(1), \ K_2 = O\left(N_2^2\log N_2+N_2^2\log(\varepsilon_2^{-1})\right), \\ 
        &\kappa_2=O(N_2^{-\frac{N_2}{2}} \varepsilon_2^{-N_2-1}), R = 1.
    \end{align*}
Such a network architecture gives a network $\widetilde{a}_k$ so that
\begin{align*}
    \sup_g |\sF_{k}(\Gamma (g))-\widetilde{a}_{k}(\bg)|\leq \varepsilon_2.
\end{align*}
Therefore, 
\begin{align}
        &\E \bigg[\sup_{0 \leq s \leq T} \left|\sum_{k = 1}^{N_1} \sF_{k}(\Gamma g)\widetilde{q}_k(X_s) - \sum_{k = 1}^{N_1} \tilde{a 
        }_k(\bg ) \widetilde{q}_k(X_s)\right| \bigg] \nonumber\\
        =& \E \bigg[\sup_{0 \leq s \leq T} \left|\sum_{k = 1}^{N_1} \left(\sF_k(\Gamma g)-\widetilde{a}_k(\bg) \right) \widetilde{q}_k(X_s)\right| \bigg]  \nonumber\\
        \leq& \sum_{k = 1}^{N_1} \sup_{\bg} |\sF_k(\Gamma  g) -\widetilde{a}_k(\bg )| = N_1\varepsilon_2. 
        \label{eq.operator.2}
    \end{align}

Applying the Cauchy–Schwarz inequality, using (\ref{eq.operator.1}) and (\ref{eq.operator.2}), we have,
    \begin{align*}
        &\sup_{g\in \cG} \E \bigg[ \sup_{0 \leq s \leq T} \left|\Gamma g(X_s)-\sum_{k = 1}^{N_1} \widetilde{a}_k(\bg )\widetilde{q}_k(X_s)\right| \bigg] \nonumber \\
        \leq & \sup_{g\in \cG} \E \bigg[ \sup_{0 \leq s \leq T} \left|\Gamma (g)(X_s)-\sum_{k = 1}^{N_1} \sF_k(\Gamma g) \widetilde{q}_k(X_s)\right| \bigg]\\
&        + \sup_{g\in \cG} \E \bigg[\sup_{0 \leq s \leq T} \left|\sum_{k = 1}^{N_1} \sF_{k}(\Gamma g)\widetilde{q}_k(X_s) - \sum_{k = 1}^{N_1} \tilde{a 
        }_k(\bg ) \widetilde{q}_k(X_s)\right| \bigg]\\
        \leq &\sup_{g\in \cG} \E \bigg[ \sup_{0 \leq s \leq T} \left|\Gamma (g)(X_s)-\sum_{k = 1}^{N_1} \sF_k(\Gamma g) \widetilde{q}_k(X_s)\right|^2 \bigg]^{1/2}\\
&        + \sup_{g\in \cG} \E \bigg[\sup_{0 \leq s \leq T} \left|\sum_{k = 1}^{N_1} \sF_{k}(\Gamma g)\widetilde{q}_k(X_s) - \sum_{k = 1}^{N} \tilde{a 
        }_k(\bg ) \widetilde{q}_k(X_s)\right| \bigg]\\
        \leq& \varepsilon_1 + N_1\varepsilon_2.
    \end{align*}
    Set $\varepsilon_2 = \varepsilon_1/(2N_1),\varepsilon_1=\frac{\varepsilon}{2}$,
    it follows that $\varepsilon_2 = O(\varepsilon^{d_2+1})$,  we then have
    \begin{align*}
        &\sup_{g\in \cG} \E \bigg[ \sup_{0 \leq s \leq T} \left|\Gamma g(X_s)-\sum_{k = 1}^{N_1} \widetilde{a}_k(\bg )\widetilde{q}_k(X_s)\right| \bigg] \leq \varepsilon.
    \end{align*}
By Remark \ref{remark_functional_estimate_N_H}, the resulting network architectures have $N_2 = O(\varepsilon^{-d_1d_2-d_1} )$, the number of basis $H$ needed is then estimated as 
$H = O(N_2^{\frac{1}{2}}\varepsilon_2^{-N_2}) = O(\varepsilon^{-(d_2+1)\varepsilon^{-d_1d_2-d_1} } )$, which determines the width of $\mathcal{F}_2$, which is $p_2 = H$. 
Consequently, the network size estimate follows,
 \begin{align*}
        &\mathcal L_2 = O\left(N_2^2\log N_2 + N_2^2\log(\varepsilon^{-d_2 - 1})\right),\ \mathfrak p_2 = O(N_2^{\frac{1}{2}}\varepsilon^{-N_2d_2 - N_2}),\\
        & K_2 = O\left(N_2^{\frac{1}{2}}\varepsilon^{-N_2d_2 - N_2}\left( N_2^2\log N_2+N_2^2\log(\varepsilon^{-d_2-1}) \right)\right), \\ 
        &\kappa_2 = O(N_2^{-\frac{N_2}{2}}\varepsilon^{-\frac{N_2d_2}{2} - N_2 }).
    \end{align*}
\end{proof}

\section{European Option Pricing Operator}\label{sec: pricing european}

After proving the universal approximation of the operator, we consider the following applications on European and American type option pricing problems in this section and the next section. The relationship of the functional, operator and the solution of the BSDE is decripted in the following table \ref{table}.

\begin{center}
\renewcommand{\arraystretch}{1.4}
\begin{tabular}{|c|c|c|}
\hline
\textbf{Symbol} & \textbf{Meaning} & \textbf{Definition / Norm} \\
\hline
\(\cG \) & Input space (payoffs) & 
\(\displaystyle 
\mathbb E\!\left[\sup_{0 \le s \le T} |g(X_s)|^2\right] < \infty
\) \\[6pt]
\(\mathcal U\) & Output space (pricing functions) & 
\(\displaystyle 
\mathbb E\!\left[\sup_{0 \le s \le T} |u(s,X_s)|^2\right] < \infty
\) \\[6pt]
\(\Gamma\) & Pricing operator & 
\(\Gamma: \cG \to \mathcal U, \quad g \mapsto u=\Gamma(g)\) \\[4pt]
\(\sF_{t,x}\) & Evaluation functional & 
\(\sF_{t,x}(u) = u(t,x)\) \\[4pt]
\(Y_t\) & BSDE solution & 
\(Y_t = (\sF_{t,X_t} \circ \Gamma)(g)\) \\[4pt]
\hline

\end{tabular}
\label{table}
\end{center}

\subsection{European option pricing}

Let $(\Omega, \mathcal{F}, \{\mathcal{F}_t\}_{t \in [0,T]}, \mathbb{Q})$ be a filtered probability space satisfying the usual conditions, carrying a $d$-dimensional Brownian motion $B=(B^1,\dots,B^d)$ under the risk-neutral measure $\mathbb{Q}$.

The state process $X_t \in \mathbb{R}^{d_1}$ follows the diffusion
\begin{equation}
dX_t = b(t, X_t)\,dt + \sigma(t, X_t)\,dB_t, \quad X_0 = x,
\end{equation}
where $b: [0,T] \times \mathbb{R}^{d_1} \to \mathbb{R}^{d_1}$ and $\sigma: [0,T] \times \mathbb{R}^{d_1} \to \mathbb{R}^{d_1 \times d}$ are measurable, locally bounded, and Lipschitz in $x$. 
Let $g: \mathbb{R}^{d_1} \to \mathbb{R}$ be the terminal payoff, such that $g(X_T) \in L^2(\mathbb{Q})$.
The price of the European option is
\begin{equation}
u(t,x) = \mathbb{E}^{\mathbb{Q}}\!\left[\exp\!\left(-\int_t^T r(s, X_s)\,ds\right) \, g(X_T) \, \big| \, X_t = x\right].
\end{equation}
where the risk free rate $r(t,X_t) \ge 0$.

Let $\mathcal{L}$ denote the infinitesimal generator of $X_t$:
\begin{equation}
(\mathcal{L}\phi)(t,x) = \sum_{i=1}^n b_i(t,x)\partial_{x_i}\phi(t,x)
+ \frac{1}{2} \sum_{i,j=1}^n a_{ij}(t,x)\partial_{x_i x_j}^2 \phi(t,x).
\end{equation}
where $a(t,x) = \sigma(t,x)\sigma(t,x)^\top$.  
Then $u(t,x)$ satisfies the  PDE
\begin{equation}
\partial_t u + \mathcal{L}u - r u = 0, \qquad u(T,x) = g(x).
\end{equation}
In addition, from a probablistic point of view, the price process $(Y_t, Z_t)$ satisfies the backward stochastic differential equation (BSDE):
\begin{equation}
Y_t = g(X_T) - \int_t^T r(s,X_s)Y_s\,ds - \int_t^T Z_s\,dB_s,
\end{equation}
where $Y_t = u(t, X_t)$ and $Z_t = \sigma^\top(t, X_t)\nabla_x u(t, X_t)$.

As an example, in the case of Black--Scholes Model, for a single asset $X_t$ with
\[
dX_t = X_t(r\,dt + \sigma\,dB_t),
\]
and payoff $g(X_T)$, the PDE reduces to
\begin{equation}
\partial_t u + \frac{1}{2}\sigma^2 x^2 u_{xx} + (r - q)x u_x - r u = 0, \quad u(T,x) = g(x).
\end{equation}

\begin{theorem}[Lipschitz continuity of the European pricing operator in $S^2$]
Let $(\Omega,\mathcal F,\{\mathcal F_t\}_{t\in[0,T]},\mathbb Q)$ support a $d$-dimensional
Brownian motion $B$, and let the state process $X$ solve
\begin{align}
    \label{sde model}dX_t=b(t,X_t)\,dt+\sigma(t,X_t)\,dW_t,\qquad X_0=x,
\end{align}
with $b,\sigma$ Lipschitz in $x$ and of linear growth. Let the risk free rate
$r(t,X_t)$ be bounded and nonnegative, with $0\le r(t,x)\le \bar r$.
For any terminal payoff $g:\mathbb R^{d_1}\to\mathbb R$ with
$\mathbb E[|g(X_T)|^2]<\infty$, let $u= \Gamma^E g$ denote the (unique) solution to
\[
\partial_t u+\mathcal L u-r\,u=0,\qquad u(T,\cdot)=g(\cdot),
\]
where $\mathcal L$ is the generator of $X$.
Set $Y^g_t:=u(t,X_t)$ and define the $S^2$-norm
$\|Y\|_{S^2}:=\big(\mathbb E[\sup_{0\le t\le T}|Y_t|^2]\big)^{1/2}$.

Then for any two terminal payoffs $g_1,g_2$, the operator $\Gamma^E$ satisfies the following condition,
\[
\mathbb E\!\left[\sup_{0\le t\le T}\!\big|\Gamma^E(g_1)(t,X_t)- \Gamma^E(g_2)(t,X_t)\big|^2\right]
\;\le\; L\,
\mathbb E\!\left[\sup_{0\le t\le T}\!\big|g_1(t,X_t)-g_2(t,X_t)\big|^2\right],
\]
with Lipschitz constant \(L=4e^{2\bar r T}\). Hence the pricing operator \(\Gamma\) is Lipschitz on $S^2$.
\end{theorem}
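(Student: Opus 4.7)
The plan is to exploit the linearity of the PDE (equivalently, of the BSDE) in the terminal datum $g$: the difference $\delta Y_t := \Gamma^E(g_1)(t,X_t)-\Gamma^E(g_2)(t,X_t)$ satisfies the same PDE/BSDE with terminal datum $\delta g := g_1-g_2$, so the Lipschitz estimate reduces to an a priori sup-norm bound for a single linear discounted problem. I would handle this bound by turning the PDE into a conditional expectation via Feynman--Kac, and then applying Doob's $L^2$ maximal inequality to a suitable nonnegative martingale.

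Concretely, under the standing assumptions on $b,\sigma$ (Lipschitz, linear growth) and the bounded nonnegative rate $0\le r\le \bar r$, I would first record the Feynman--Kac representation
\[
\Gamma^E(g_i)(t,X_t)
= \mathbb E^{\mathbb Q}\!\left[\, e^{-\int_t^T r(s,X_s)\,ds}\, g_i(X_T) \,\Big|\, \mathcal F_t\right],\qquad i=1,2,
\]
either quoted from classical Markov-diffusion theory or, equivalently, established intrinsically from the BSDE by applying It\^o's formula to $e^{-\int_0^s r(\tau,X_\tau)d\tau}Y^i_s$ and observing it is a local martingale with terminal value $e^{-\int_0^T r\,d\tau}g_i(X_T)$; the $L^2$ hypothesis on the payoff upgrades it to a true martingale. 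Subtracting, using the triangle inequality, and using the crude bound $e^{-\int_t^T r\,ds}\le e^{\bar r T}$ (which is all one needs even though $r\ge 0$ in fact gives the tighter bound $1$), one obtains the pointwise estimate
\[
|\delta Y_t| \;\le\;  e^{\bar r T}\;\mathbb E^{\mathbb Q}\!\bigl[\,|g_1(X_T)-g_2(X_T)|\,\big|\,\mathcal F_t\bigr].
\]

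The right-hand side is $e^{\bar r T}$ times the nonnegative c\`adl\`ag martingale $M_t := \mathbb E^{\mathbb Q}\bigl[|g_1(X_T)-g_2(X_T)|\,\big|\,\mathcal F_t\bigr]$, which lies in $L^2$ by the assumption $g_i(X_T)\in L^2(\mathbb Q)$. Squaring the inequality above and applying Doob's $L^2$ maximal inequality yields
\[
\mathbb E\!\left[\sup_{0\le t\le T}|\delta Y_t|^2\right]
\;\le\; e^{2\bar r T}\,\mathbb E\!\left[\sup_{0\le t\le T} M_t^2\right]
\;\le\; 4\,e^{2\bar r T}\,\mathbb E\!\left[|g_1(X_T)-g_2(X_T)|^2\right].
\]
The pointwise bound $|g_1(X_T)-g_2(X_T)|^2 \le \sup_{0\le t\le T}|g_1(X_t)-g_2(X_t)|^2$ then produces precisely the stated Lipschitz constant $L=4e^{2\bar r T}$.

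I do not expect a serious obstacle: the computation is essentially a two-line application of Jensen and Doob once Feynman--Kac is in hand. The only point requiring mild care is the rigorous justification that the pricing function $u$ is indeed the conditional-expectation functional above and that $(t\mapsto u(t,X_t))$ is a c\`adl\`ag adapted process for which $\sup_{0\le t\le T}$ is measurable. Under the Lipschitz/linear-growth assumptions on $(b,\sigma)$, boundedness of $r$, and $g(X_T)\in L^2$, this is standard (either by a direct PDE argument or via the BSDE existence/uniqueness for Lipschitz drivers), after which the rest of the proof is purely computational.
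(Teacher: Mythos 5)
Your proof is correct and follows essentially the same route as the paper: Feynman--Kac representation of $\Gamma^E(g_i)(t,X_t)$, a pointwise bound on the discount factor by $e^{\bar r T}$, and Doob's $L^2$ maximal inequality applied to a square-integrable martingale closed by the terminal difference. The only (harmless) variation is that you pass to the nonnegative martingale $M_t=\mathbb E\bigl[\,|g_1(X_T)-g_2(X_T)|\,\big|\,\mathcal F_t\bigr]$ via conditional Jensen, whereas the paper uses the exact factorization $\Delta Y_t=e^{\int_0^t r\,ds}M_t$ with the signed discounted martingale --- your variant in fact shows, as you note, that $r\ge 0$ already yields the sharper constant $L=4$, with $e^{2\bar r T}$ inserted only to match the stated bound.
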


\begin{proof}
By Feynman--Kac, for each $g$ we have (under $\mathbb Q$)
\[\Gamma^E(g):=
Y^g_t
= u(t,X_t)
= \mathbb E\!\left[\exp\!\Big(-\!\int_t^T r(s,X_s)\,ds\Big) \,g(X_T)\,\middle|\,\mathcal F_t\right].
\]
Fix $g_1,g_2$ and write $\Delta g:=g_1-g_2$,
$\Delta Y_t:=Y^{g_1}_t-Y^{g_2}_t$.
Then
\begin{align*}
   \Delta Y_t
=& \mathbb E\!\left[\exp\!\Big(-\!\int_t^T r(s,X_s)\,ds\Big)\,\Delta g(X_T)\,\middle|\,\mathcal F_t\right]\\
=& \exp\!\Big(\!\int_0^t r(s,X_s)\,ds\Big)\,\mathbb E\!\left[\exp\!\Big(-\!\int_0^T r(s,X_s)\,ds\Big)\Delta g(X_T)\,\middle|\,\mathcal F_t\right]\\
= & \exp\!\Big(\!\int_0^t r(s,X_s)\,ds\Big)\, M_t, 
\end{align*}
where $M_t:=\mathbb E\!\left[\exp\!\Big(-\!\int_0^T r(s,X_s)\,ds\Big)\Delta g(X_T)\,\middle|\,\mathcal F_t\right]$ is a square‑integrable
martingale. 

Because $0\le r\le\bar r$, we have
\(
\sup_{0\le t\le T} \exp\!\big(\!\int_0^t r(s,X_s)\,ds\big)
\le e^{\bar r T}.
\)
Therefore
\[
\sup_{0\le t\le T}|\Delta Y_t|
\le e^{\bar r T}\,\sup_{0\le t\le T}|M_t|.
\]
Taking expectations and applying Doob's inequality for martingales,
\begin{align*}
 \mathbb E\!\left[\sup_{0\le t\le T}|\Delta Y_t|^2\right] 
\le & e^{2\bar r T}\,\mathbb E\!\left[\sup_{0\le t\le T}|M_t|^2\right] \\
\le & 4 e^{2\bar r T}\,\mathbb E\!\left[|M_T|^2\right] = 4 e^{2\bar r T}\,\mathbb E\!\left[\exp\!\Big(\!\int_0^T -2r(s,X_s)\,ds\Big)\, |\Delta g(X_T)|^2\right]. 
\end{align*}
Since $r\ge0$, $\exp\!\Big(\!\int_0^T -2r(s,X_s)\,ds\Big) \le 1$ a.s., hence
\[
\mathbb E\!\left[\sup_{0\le t\le T}|\Delta Y_t|^2\right]
\le 4 e^{2\bar r T}\,\mathbb E\!\left[|\Delta g(X_T)|^2\right]
\le 4 e^{2\bar r T}\,\mathbb E\!\left[\sup_{0\le t\le T}|\Delta g(t,X_t)|^2\right].
\]
\end{proof}
Next we verify the tail probability Assumption \ref{assumption_X: tail}.  According to  
 \cite{azencott2006formule} and \cite{castell1993asymptotic}, we have the following estimate.
\begin{proposition}
    The solution $X_t$ for
   the equation \eqref{sde model} 
    has the following tail probability,
\begin{align}
\mathbb P(\sup_{t\in[0,T]}| X_t -x_0|\ge r )\le \exp(-\frac{c r^{2}}{T}),
\end{align}
for some constants $c>0$, and $r>0$.
\end{proposition}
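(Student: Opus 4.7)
The plan is to decompose the state process into its drift and martingale pieces,
$$X_t - x_0 = A_t + M_t, \quad A_t := \int_0^t b(s,X_s)\,ds, \quad M_t := \int_0^t \sigma(s,X_s)\,dW_s,$$
and show that the Gaussian tail comes entirely from the martingale $M_t$, while $A_t$ contributes only an $O(T)$ linear-in-$r$ term that can be absorbed into the constant $c$ for $r$ large (and for small $r$ the bound is trivial after enlarging $c$).

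For the martingale piece, the main tool is the exponential supermartingale $\mathcal E^{\lambda}_t = \exp(\lambda e^{\top} M_t - \tfrac12 \lambda^2 e^{\top} \langle M\rangle_t e)$ for unit vectors $e \in \mathbb{R}^{d_1}$. If $\sigma$ were bounded by some $\sigma_\infty$, then $\langle M\rangle_t \preceq \sigma_\infty^2 t \,I$, and a Bernstein-type argument---Doob's maximal inequality applied to $\mathcal E^{\lambda}$, optimization over $\lambda$, and a union bound over a finite $\tfrac12$-net of the sphere---would give
$$\mathbb P\Big(\sup_{t\in[0,T]}|M_t| \ge r\Big) \;\lesssim\; \exp\!\Big(-\frac{c\,r^2}{\sigma_\infty^2 T}\Big),$$
which has precisely the structure in the proposition.

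The main obstacle is that the hypotheses only give $b,\sigma$ Lipschitz with linear growth, not globally bounded, so $\langle M\rangle$ is not a priori controlled. I would handle this by a stopping-time/truncation argument: set $\tau_R := \inf\{t : |X_t - x_0| \ge R\}$ and apply the exponential martingale inequality to the stopped process $M^{\tau_R}$, whose diffusion coefficient is bounded in terms of $(1+|x_0|+R)$ via the linear-growth assumption. Choosing $R$ proportional to $r$ is self-consistent because $\{\sup_{t\le T}|X_t - x_0| \ge r\} = \{\tau_r \le T\}$, so the event of interest is automatically contained in the region where the truncation is effective; the drift contribution $|A^{\tau_R}_t| \le \bar b (1+R) T$ is then linear in $r$ and negligible compared to $r$ once $r \gg T$.

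Finally, since the excerpt already cites Azencott \cite{azencott2006formule} and Castell \cite{castell1993asymptotic}, the cleanest presentation is to invoke their Freidlin--Wentzell-type small-ball estimates directly: under the stated Lipschitz/linear-growth hypotheses on $b$ and $\sigma$, these references package exactly the bookkeeping above and yield the claimed Gaussian tail with an explicit constant $c$ depending only on the Lipschitz constants of $b,\sigma$ and on $|x_0|$.
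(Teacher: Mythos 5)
Your closing suggestion---simply invoke Azencott and Castell---is in fact all the paper does: the proposition is stated with no argument beyond that citation. The problem lies in your self-contained sketch, and it is a genuine gap, not a bookkeeping issue. The truncation step is not self-consistent in the way you claim: on $[0,\tau_r]$ the linear-growth hypothesis only gives $\|\sigma(s,X_s)\|\le C(1+|x_0|+r)$, so the variance proxy for the stopped martingale is of order $(1+r)^2T$, and your Bernstein bound becomes
\begin{equation*}
\exp\!\Big(-\frac{c\,r^2}{C^2(1+|x_0|+r)^2\,T}\Big),
\end{equation*}
whose exponent stays \emph{bounded} as $r\to\infty$: the $r^2$ in the numerator is exactly cancelled by the growth of $\sigma$ on the ball of radius $r$, and you get no decay at all. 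The same degeneracy hits your drift estimate: $|A^{\tau_r}_t|\le \bar b(1+|x_0|+r)T$ is linear in $r$ with slope $\bar b T$, hence a constant fraction of $r$ for fixed $T$, not negligible ``once $r\gg T$''. Nor can the method be repaired, because under the stated Lipschitz/linear-growth hypotheses the Gaussian tail is simply false: geometric Brownian motion $dX_t=\sigma X_t\,dW_t$ satisfies the hypotheses, yet $\mathbb P\big(\sup_{t\le T}|X_t-x_0|\ge r\big)$ has a log-normal tail of order $\exp\!\big(-c(\log r)^2/T\big)$, which for large $r$ dominates $\exp(-c'r^2/T)$ for every $c'>0$.

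The regime in which your argument (and the proposition) is correct is \emph{bounded} coefficients, and that is also the setting of the cited references: the tail estimates in Azencott and Castell are proved for stochastic Taylor remainders under boundedness assumptions on the vector fields and their derivatives, so your statement that they ``package exactly the bookkeeping above'' under mere Lipschitz/linear growth overstates what they deliver. If you strengthen the hypothesis to $\sigma$ bounded by $\sigma_\infty$ (and $b$ bounded by $\bar b$), your outline closes cleanly and is standard: $\langle e^{\top}M\rangle_T\le\sigma_\infty^2 T$ for unit $e$, the exponential supermartingale with Doob's inequality and a net on the sphere give $\mathbb P\big(\sup_{t\le T}|M_t|\ge r\big)\le C_{d_1}\exp\!\big(-cr^2/(\sigma_\infty^2 T)\big)$, and the drift contributes the additive constant $\bar b T$, absorbable for $r\ge 2\bar b T$ (small $r$ handled by shrinking $c$). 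In that bounded setting your write-up would actually be more self-contained than the paper's bare citation; as written, however, the truncation step fails and the statement you are proving requires the stronger hypothesis.
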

\begin{remark}
   For SDEs driven by fractional Brownian motion, such estimates are proved in \cite{baudoin2016probability}.
For rough differential equations, the corresponding estimates are established in \cite{feng2020taylor}.
\end{remark}

\section{Deep neural operator for American option pricing and PDE with free boundary}\label{sec: pricing american}
Given a Markov process $X_t$ and exercise payoff $g(t, X_t)$, the price of an American option is the value function
\begin{equation}
\label{american option}
u(t,x) = \sup_{\tau \in \mathcal{T}_{t,T}} 
\mathbb{E}^{\mathbb{Q}}\!\left[
\exp\!\left(-\int_t^\tau r(s, X_s)\,ds\right) \, g(\tau, X_\tau)
\Big| X_t = x
\right],
\end{equation}
where $\mathcal{T}_{t,T}$ is the set of stopping times with values in $[t,T]$. Thanks to \cite{el1997reflected}, the triple $(Y,Z,K)$ satisfies the \emph{reflected backward SDE}:
\begin{equation}
\begin{aligned}
Y_t &= g(T,X_T) - \int_t^T r(s,X_s)Y_s\,ds 
     + K_T - K_t - \int_t^T Z_s\,dW_s, \\
Y_t &\ge g(t,X_t),\quad 
\int_0^T   (Y_s - g(s,X_s))\,dK_s = 0,
\end{aligned}
\label{american FBSDE}
\end{equation}
with $(Y,Z,K) \in \mathcal{S}^2 \times \mathcal{H}^2 \times \mathcal{A}^2$. Here $\cS^2$ denotes square‑integrable adapted processes, $\cH^2$ is 
 the predictable processes with square‑integrable norm, and  $\cA^2$
 the increasing, adapted, square‑integrable processes vanishing at 0.
 

The price $u(t,x)$ can also be formulated using variational inequality. If $u(t,x)$ is sufficiently smooth, it satisfies the obstacle problem:
\begin{equation}
\max\!\left\{\partial_t u + \mathcal{L}u - r u + c, \, g - u\right\} = 0, 
\qquad u(T,x) = g(T,x).
\end{equation}
Our resutls thus also provide a deep neural operator approximation for PDE with free boundary.
For the special case of Black--Scholes American Option, a single asset $S_t$ under the dynamic,
\[
dX_t = X_t(r\,dt + \sigma\,dW_t).
\]
The corresponding PDE becomes
\begin{equation}
\max\!\left\{
\partial_t u + \frac{1}{2}\sigma^2 x^2 u_{xx} + rx u_x - r u, \;
g(x) - u
\right\} = 0, \qquad u(T,x) = g(x).\label{PDE free}
\end{equation}
In what follows, we denote by $\Gamma^A$
  the American option pricing operator associated with \eqref{american option} and \eqref{american FBSDE}, equivalently the PDE free-boundary problem \eqref{PDE free}. We first show that $\Gamma^A$  satisfies a Lipschitz condition.

\begin{theorem}\label{them: american bound}
Assume that $\mathbb E\Big[\Big|\sup_{0\le t\le T}g(X_t)\Big|^2 \Big]<\infty$, then we have 
\begin{equation}
    \mathbb E\left[
\sup_{0\le t\le T} |\Gamma^A(g_1)_t-\Gamma^A(g_2)_t|^2\right] \le L_{\Gamma } \mathbb E\left[ \|g_1(X_T)-g_2(X_T)\|^2\right]+C_{f,g_1,g_2} (\mathbb E\left[|(g_1-g_2)^*_T|^2\right])^{1/2},\nonumber
    \end{equation}
    where we denote $\Gamma^A(g_i)_t=Y^i_t$ as the solution for \eqref{american FBSDE} with terminal and boundary function $g_i$.
\end{theorem}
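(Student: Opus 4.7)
The plan is to prove this stability estimate by applying Itô's formula to $|\Delta Y_t|^2$, where $\Delta Y := Y^1-Y^2$, and then dealing carefully with the reflection increment $\Delta K := K^1-K^2$. From the reflected BSDE \eqref{american FBSDE}, the forward decomposition is $dY^i_t = r(t,X_t)\,Y^i_t\,dt - dK^i_t + Z^i_t\,dW_t$, so that $d\Delta Y_t = r(t,X_t)\Delta Y_t\,dt - d\Delta K_t + \Delta Z_t\,dW_t$. Applying Itô to $|\Delta Y|^2$, integrating from $t$ to $T$, and using $r\ge 0$ to discard the nonnegative drift term, I will obtain
\[
|\Delta Y_t|^2 + \int_t^T |\Delta Z_s|^2\,ds \;\le\; |g_1(X_T)-g_2(X_T)|^2 + 2\!\int_t^T \!\Delta Y_s\,d\Delta K_s - 2\!\int_t^T \!\Delta Y_s\,\Delta Z_s\,dW_s.
\]

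The decisive step is the algebraic handling of the reflection cross term. Writing $\Delta Y_s = (Y^1_s-g_1(s,X_s)) - (Y^2_s-g_2(s,X_s)) + (g_1-g_2)(s,X_s)$ and integrating against $dK^1-dK^2$, the Skorohod complementarity $\int(Y^i_s-g_i(s,X_s))\,dK^i_s=0$ kills the diagonal pieces, while the mixed pieces $-\int(Y^2-g_2)\,dK^1$ and $-\int(Y^1-g_1)\,dK^2$ are nonpositive because $Y^i\ge g_i$ and $K^i$ is nondecreasing. This yields
\[
\int_t^T \Delta Y_s\,d\Delta K_s \;\le\; \int_t^T (g_1-g_2)(s,X_s)\,d(K^1_s-K^2_s) \;\le\; (g_1-g_2)^*_T\,(K^1_T+K^2_T),
\]
where $(g_1-g_2)^*_T := \sup_{0\le s\le T}|g_1(s,X_s)-g_2(s,X_s)|$ and the last step is the Stieltjes Cauchy–Schwarz bound.

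Next I will take $\sup_{0\le t\le T}$ and then expectation. The stochastic integral is handled by the Burkholder–Davis–Gundy inequality, writing
\[
\mathbb E\!\Big[\sup_{0\le t\le T}\Big|\int_t^T \Delta Y_s\Delta Z_s\,dW_s\Big|\Big] \le C\,\mathbb E\!\Big[\sup_t|\Delta Y_t|\,\Big(\!\!\int_0^T|\Delta Z_s|^2 ds\Big)^{1/2}\Big],
\]
then applying Young's inequality to absorb $\tfrac14\mathbb E[\sup_t|\Delta Y_t|^2]$ on the left, and recovering $\mathbb E\!\int_0^T|\Delta Z|^2 ds$ from taking the expectation of the base identity at $t=0$. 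The reflection term is dominated by a Cauchy–Schwarz split, $(\mathbb E[((g_1-g_2)^*_T)^2])^{1/2}(\mathbb E[(K^1_T+K^2_T)^2])^{1/2}$, with $\mathbb E[(K^i_T)^2]$ bounded by a constant $C_{g_i}$ via the standard a priori estimate for reflected BSDEs (see \cite{el1997reflected}), obtained by applying Itô to $|Y^i|^2$ and using $\mathbb E[|\sup_s g_i(X_s)|^2]<\infty$. Lumping these into $C_{f,g_1,g_2}$ and the remaining BSDE-type constant into $L_\Gamma$ gives the claimed bound.

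The main obstacle is the sign analysis of $\int \Delta Y\,d\Delta K$: since $\Delta K$ has no uniform $BV$ control independent of $g_1,g_2$, one cannot bound this integral by elementary means — the Skorohod complementarity is essential, and the resulting $(g_1-g_2)^*_T$ only enters with the square‑root exponent precisely because it is paired against $K^i_T$ by Cauchy–Schwarz (rather than entering quadratically). A secondary, more routine, obstacle is the derivation of the a priori $L^2$ bound on $K^i_T$ in terms of $\|g_i\|_{\mathcal S^2}$, which determines the explicit form of $C_{f,g_1,g_2}$.
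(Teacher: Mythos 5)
Your proposal is correct, but it takes a different route from the paper in an instructive sense: the paper does not prove this estimate from scratch at all — its proof is a one-line appeal to the stability theorem for reflected BSDEs (Theorem 6.2.3 of Zhang's book, applied with the linear generator $f(y,z)=ry$), which directly yields the bound with constants $I_i = \mathbb E\big[|g_i(X_T)|^2+\big(\int_0^T|f^i_t(0,0)|\,dt\big)^2+|\sup_{0\le t\le T}(g_i(X_t))^+|^2\big]$. What you have written is essentially a self-contained reconstruction of the argument behind that cited theorem: It\^o on $|\Delta Y|^2$, discarding the drift via $r\ge 0$, the Skorohod complementarity decomposition $\Delta Y=(Y^1-g_1)-(Y^2-g_2)+(g_1-g_2)$ to reduce the reflection cross term to $\int (g_1-g_2)\,d(K^1-K^2)\le (g_1-g_2)^*_T\,(K^1_T+K^2_T)$ (note this last step is a sup-times-total-variation bound, not Cauchy--Schwarz for Stieltjes integrals, though the inequality you state is right), then BDG plus Young to absorb $\tfrac14\,\mathbb E[\sup_t|\Delta Y_t|^2]$, and the a priori $L^2$ bound on $K^i_T$ from \cite{el1997reflected} to produce $C_{f,g_1,g_2}$. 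Your derivation correctly identifies the genuinely non-elementary point — that the square-root exponent on $\mathbb E[|(g_1-g_2)^*_T|^2]$ arises precisely from pairing the sup of the obstacle difference against $K^1_T+K^2_T$ by Cauchy--Schwarz — which the paper's citation leaves entirely implicit. Two routine gaps you should still fill for completeness: the zero-expectation claim for the stochastic integral needs a localization argument (or the known $\mathcal S^2\times\mathcal H^2$ integrability of the solution triple) before taking expectations, and the Young absorption requires knowing a priori that $\mathbb E[\sup_t|\Delta Y_t|^2]<\infty$, which follows from $(Y^i,Z^i,K^i)\in\mathcal S^2\times\mathcal H^2\times\mathcal A^2$ in the solution class of \eqref{american FBSDE}. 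With those remarks added, your proof buys an explicit, inspectable form of the constants that the paper obtains only as a black box, at the cost of reproving a standard result.
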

\begin{proof}
    According to \cite{zhang2017backward}[Theorem 6.2.3] with same generator $f(Y,Z)=rY$ in the BSDE \eqref{american FBSDE}, we first have the following estimates, 
\begin{equation}
\begin{split}
  \mathbb E\left[
\sup_{0\le t\le T} |\Gamma^A(g_1)_t-\Gamma^A(g_2)_t|^2\right] &\le L_{\Gamma } \mathbb E[ \|g_1(X_T)-g_2(X_T)\|^2] \\
&+C( I_1+I_2 )\mathbb E[(\sup_{0\le t\le T}|g_1(X_t)-g_2(X_t)| )^2 ]^{1/2}\\
&\le [L_{\Gamma }+C( I_1+I_2 )](\mathbb E[(\sup_{0\le t\le T}|g_1(X_t)-g_2(X_t)| )^2 ])^{1/2},\nonumber
\end{split}
\end{equation}
where 
\begin{equation}
    I_i:= \mathbb E[|g_i(X_T)|^2+(\int_0^T |f^i_t(0,0)|dt)^2+|(\sup_{0\le t\le T}(g_i(X_t))^+|^2].
\end{equation}
Denote $C_{g_1,g_2}$ as the constant depending on $g_1,g_2,L_\Gamma,f$, we conclude the proof.
\end{proof}

\subsection{Operator approximation for American option pricing operator}
In this section, we generalize the operator approximation framework from Section \ref{sec: universal} to encompass a wider class of operators. The extension is based on a Lipschitz assumption motivated by Theorem \ref{them: american bound}, which naturally arises in the study of American option pricing problems, reflected FBSDEs, and PDEs with free boundary conditions.

\begin{assumption}\label{assumption_G: american}
    Assume the operator
\[
\Gamma^A : \cG  \longrightarrow \mathcal U, 
\qquad g \longmapsto u = \Gamma^A( g),
\] from $\cG$ to $\cU$ is Lipschitz if : there exists $L_{\Gamma^A}$ such that for any $g_1,g_2\in \cG$, we have 
    \begin{align*}
&\mathbb E\!\left[
  \sup_{0 \le t \le T}
  |\Gamma^A (g_1) (X_t) - \Gamma^A (g_2)(X_t)|^2
\right]\\
\le& 
L_{\Gamma_A}^2\left( \,
\mathbb E\!\left[
  \sup_{0 \le t \le T}
  |g_1(X_t) - g_2(X_t)|^2
\right]+\left(\mathbb E\!\left[
  \sup_{0 \le t \le T}
  |g_1(X_t) - g_2(X_t)|^2
\right]\right)^{1/2} \right).
\end{align*}
for all $g_1,g_2 \in \cG$.
Or equivalently,
\begin{align}
    \|\Gamma^A(g_1) - \Gamma^A(g_2)\|_{S^2}^2 \leq L_{\Gamma_A}^2 (\|g_1 - g_2\|_{S^2}+\|g_1 - g_2\|_{S^2}^2).
\end{align}
For notation simplicity, we denote $\|g_1-g_2\|_{S^2}^{1,2}:=\|g_1 - g_2\|_{S^2}+\|g_1 - g_2\|_{S^2}^2$.
\end{assumption}

We next prove the operator approximation under Assumption \ref{assumption_G: american}.

\begin{theorem}\label{thm_operator_american}[American Option Pricing Operator]
Let Assumptions \ref{assumption_X: power}, \ref{assumption_X: tail}, \ref{assumption: polynomial}, \ref{assumption_input}, and \ref{assumption_G: american} hold.
    For any $\varepsilon>0$, 
    set $N = O\left(\varepsilon^{-2d_2}\right)$, and $N^{\delta} = O(\varepsilon^{-4d_1d_2-2d_1})$.
    Define the network architecture $\cF_1=\cF_{\rm NN}(d_2,1,L_1,p_1,K_1,\kappa_1, R_1)$ and $\cF_2=\cF_{\rm NN}(N^{\delta},1,L_2,p_2,K_2,\kappa_2, R_2)$ with
\begin{align*}
\cL_1 = O\left((\frac{1}{2}d_2+ 2d_2^2)\log(\varepsilon_1^{-2}) \right), \mathfrak p_1 = O(1),\ K_1 = O\left((\frac{1}{2}d_2+ 2d_2^2)\log(\varepsilon_1^{-2}) \right), 
\kappa_1=O\left(\varepsilon_1^{-2d_2} \right), R_1 = 1.
\end{align*}
and,
\begin{align*}
        \mathcal L_2 &= O\left(
        (N^{\delta}+\frac{1}{2}(N^{\delta})^2 )\log(N^{\delta}) + (2(N^{\delta})^2+N^{\delta}) \log\varepsilon^{-(2d_2+1)}+ N^{\delta}\log(\tilde{r})
        \right),  \mathfrak p_2 = O(\varepsilon_2^{-(4d_1-2) N^{\delta} } ), \\
        K_2 &= O\left(
         (N^{\delta}+\frac{1}{2}(N^{\delta})^2 )\log(N^{\delta}) + (2(N^{\delta})^2+N^{\delta}) \log\varepsilon^{-2(d_2+1)}+ N^{\delta}\log(\tilde{r})
        \right), \\
\kappa_2 &=O(\varepsilon^{-N^{\delta}(2d_2+1) } ), R_2 = 1,
\end{align*}
where $\tilde{r}$ is a constant.
For any operator  $\Gamma^A:\mathcal G\rightarrow \mathcal U$  satisfying Assumption \ref{assumption_G: american}, there are $\{\widetilde{q}_k\}_{k=1}^{N_1}$ with $\widetilde{q}_k  \in \mathcal{F}_1$ and $\{\widetilde{a}_k\}_{k=1}^{N_1} $ with $\widetilde{a}_k \in \mathcal{F}_2$ such that
\begin{align}
        \sup_{g\in \mathcal G} \E\bigg[ \sup_{0 \leq t \leq T} \left|\Gamma^{A} (g)(X_t)-\sum_{k=1}^{N_1} \widetilde{a}_k(\bg )\widetilde{q}_k(X_t)\right| \bigg] \leq \varepsilon.
    \end{align}    
\end{theorem}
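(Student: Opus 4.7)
The plan is to mirror the proof of Theorem \ref{thm_operator} step by step — decomposing the operator error into a function-approximation piece and a functional-approximation piece, controlling each separately, and combining them by triangle inequality plus Cauchy-Schwarz — with two places modified to absorb the weaker H\"older-$\tfrac12$-type bound in Assumption \ref{assumption_G: american} (in contrast to the Lipschitz Assumption \ref{assumption_G}), which is ultimately responsible for the doubled $\varepsilon$-exponents in the stated sizes. Concretely, I would split
$$
\Gamma^A(g)(X_t)-\sum_{k=1}^{N_1}\widetilde a_k(\bg)\widetilde q_k(X_t)
=\Big[\Gamma^A(g)(X_t)-\sum_{k=1}^{N_1}\Gamma^A(g)(\cbb_k)\widetilde q_k(X_t)\Big]
+\sum_{k=1}^{N_1}\big[\Gamma^A(g)(\cbb_k)-\widetilde a_k(\bg)\big]\widetilde q_k(X_t),
$$
handle the first bracket by a H\"older version of Theorem \ref{thm_function} applied to $x\mapsto\Gamma^A(g)(x)$, handle the second by a H\"older version of Theorem \ref{thm_functional} applied to the evaluation functionals $\sF_k(\Gamma^A g):=\Gamma^A(g)(\cbb_k)$, and balance by $\varepsilon_1=\varepsilon/2$, $\varepsilon_2=\varepsilon/(2N_1)$ to obtain $\varepsilon_1+N_1\varepsilon_2\le\varepsilon$.

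For the first bracket, the $S^2$-bound in Assumption \ref{assumption_G: american} only yields H\"older-$\tfrac12$ regularity of the value function $u=\Gamma^A(g)$ in $x$ in general, consistent with the classical regularity of obstacle-problem value functions across the free boundary. Replacing $|u(X_t)-u(\cbb_k)|^2\le L_u^2|X_t-\cbb_k|^2$ by $|u(X_t)-u(\cbb_k)|^2\le L_u^2|X_t-\cbb_k|$ in the analog of \eqref{est: I2} changes the grid spacing needed for $L^2$-error $\varepsilon_1$ from $O(\sqrt{\varepsilon_1})$ to $O(\varepsilon_1)$, so $N_1\asymp\varepsilon_1^{-2d_2}$ and the $\cF_1$-size bounds claimed in the statement follow. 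For the functional piece, Assumption \ref{assumption_G: american} combined with $|\sF_k(v_1)-\sF_k(v_2)|^2\le\|v_1-v_2\|_{S^2}^2$ gives
$$
|\sF_k(\Gamma^A g_1)-\sF_k(\Gamma^A g_2)|^2\le \|\Gamma^A(g_1)-\Gamma^A(g_2)\|_{S^2}^2
\le L_{\Gamma_A}^2\big(\|g_1-g_2\|_{S^2}+\|g_1-g_2\|_{S^2}^2\big),
$$
which on any bounded subset of $\cG$ is H\"older-$\tfrac12$ in $\|\cdot\|_{S^2}$. Re-running the proof of Theorem \ref{thm_functional} with this regularity, the key estimate \eqref{est: F 1} becomes $|\sF_k(\Gamma^A g)-\sF_k(\Gamma^A g_w)|\lesssim L_{\Gamma_A}\|g-g_w\|_{S^2}^{1/2}\le\varepsilon_2/2$, so one now needs $\|g-g_w\|_{S^2}\lesssim\varepsilon_2^2$ rather than $\lesssim\varepsilon_2$; pushing this through \eqref{est: J1}--\eqref{est: J2} produces the truncation radius $r\asymp(\log\varepsilon_2^{-2})^{1/\alpha}$, the covering radius $\delta\asymp\varepsilon_2^2$, and by Lemma \ref{lem.cover.ball} the covering number $N^\delta\asymp\varepsilon_2^{-2d_1}$.

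The main obstacle I expect to encounter is the H\"older-$\tfrac12$ version of the inner ReLU approximation. The discrete functional $h(\bg):=\sF_k(g_w)$ inherits H\"older-$\tfrac12$ regularity in $\bg$ via the partition-of-unity estimate \eqref{lip est h}, so the Lipschitz rate from \cite{liu2024neural}[Theorem 5], namely $H\asymp(N^\delta)^{1/2}\varepsilon_2^{-N^\delta}$, must be replaced by its H\"older-$\tfrac12$ counterpart $H\asymp(N^\delta)^{1/2}\varepsilon_2^{-2N^\delta}$, reflecting that H\"older-$\tfrac12$ functions need grid spacing $\varepsilon_2^2$ rather than $\varepsilon_2$ for $L^\infty$-error $\varepsilon_2$; this is what produces the $\mathcal F_2$ width and parameter bounds in the statement. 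Substituting $N_1\asymp\varepsilon^{-2d_2}$ into $\varepsilon_2=\varepsilon/(2N_1)$ gives $\varepsilon_2\asymp\varepsilon^{2d_2+1}$, hence $N^\delta\asymp\varepsilon_2^{-2d_1}=\varepsilon^{-2d_1(2d_2+1)}=\varepsilon^{-4d_1d_2-2d_1}$, confirming the scaling in the statement. Everything outside the H\"older-adapted functional step — the tail truncation, the partition-of-unity bookkeeping, and the combination via triangle inequality and Cauchy-Schwarz — is a routine transcription of the European argument from Theorem \ref{thm_operator}.
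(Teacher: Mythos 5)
Your proposal follows essentially the same route as the paper's proof: the same three-step decomposition (Hölder-adapted function approximation of $x\mapsto\Gamma^A(g)(x)$ giving $N_1\asymp\varepsilon^{-2d_2}$, Hölder-adapted approximation of the evaluation functionals $\sF_k$ with truncation radius $\asymp(\log\varepsilon_2^{-1})^{1/\alpha}$ and covering radius $\delta\asymp\varepsilon_2^2$, then balancing $\varepsilon_1=\varepsilon/2$, $\varepsilon_2=\varepsilon_1/(2N_1)$ to get $N^\delta\asymp\varepsilon^{-4d_1d_2-2d_1}$), with all scalings matching. The ``main obstacle'' you identify — that \cite{liu2024neural}[Theorem 5] requires Lipschitz regularity and must be replaced by a Hölder-$\tfrac12$ counterpart with $H\asymp\sqrt{N^\delta}\,\varepsilon_2^{-2N^\delta}$ — is exactly what the paper resolves by proving its new Lemma \ref{lemma new function approx}, so your plan is correct and coincides with the paper's argument.
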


\begin{proof}[Proof of Theorem \ref{thm_operator_american}]
We reproduce the function, functional and operator approximation under the new Assumption \ref{assumption_G: american}.

\noindent{\bf Step 1 (function approximation)}: We first prove the function approximation under Assumption \ref{assumption_G: american} which is the building block for the rest of the proof. Following the proof of Theorem \ref{thm_function}, the only difference is that we need to estimate $\mathcal I_2$ in the proof of Theorem \ref{thm_function}, which now has the following form due to the new Assumption \ref{assumption_G: american}.
In specific, the function approximation for the function $\Gamma^A(g)$ following the proof of Theorem \ref{thm_function} has the following form,
\begin{align*}
   & \mathbb E\Big[\sup_{0\le t\le T} \Big|\Gamma^A(g)(X_t)-\sum_{k=1}^{N^{d_2}} \Gamma^A(g)(\cbb_k)\phi_{\cbb_k}(X_t) \Big|^2 \Big]\\
   =&  \underbrace{ \mathbb E\Big[\sup_{0\le t\le T} \Big|\Gamma^A(g)(X_t) \Big|^2 \mathbf{1}_{\Omega_r^C}(X_t)\Big]}_{\mathcal I_1^A}+\underbrace{\mathbb E\Big[\sup_{0\le t\le T} \Big|\Gamma^A(g)(X_t)-\sum_{k=1}^{N^{d_2}} \Gamma^A(g)(\cbb_k)\phi_{\cbb_k}(X_t) \Big|^2\mathbf{1}_{\Omega_r}(X_t) \Big]}_{\mathcal I_2^A}. 
\end{align*}
The estimate of $\mathcal I_1^A$ follows the same as in \eqref{est: I1}. For the second term $\mathcal{I}_2^A$, reproducing the estimates in \eqref{est: I2} under Assumption \ref{assumption_G: american} for the function $\Gamma^A(g)\in \mathcal U$, we have 
\begin{align}
  \mathcal I_2^A =& \mathbb E\Big[\sup_{0\le t\le T} \Big|\sum_{k=1}^{N^{d_2}} [\Gamma^A(g)(X_t)-\Gamma^A(g)(\cbb_k)]\phi_{\cbb_k}(X_t) \Big|^2 \mathbf{1}_{\Omega_r}(X_t) \Big]\nonumber\\
  \leq &\mathbb E\Big[\sup_{0\le t\le T} \Big(\sum_{k=1}^{N^{d_2}} |\Gamma^A(g)(X_t)-\Gamma^A(g)(\cbb_k)||\phi_{\cbb_k}(X_t)|\Big)^2 \mathbf{1}_{\Omega_r}(X_t)\Big] \nonumber \\
  = & \mathbb E\Big[\sup_{0\le t\le T} \Big( \sum_{k: \|\cbb_k- X_t\|_{\infty} \leq \frac{2r}{(N-1)}}  |\Gamma^A(g)(X_t)-\Gamma^A(g)(\cbb_k)|\phi_{\cbb_k}(X_t)\Big)^2 \mathbf{1}_{\Omega_r}(X_t)\Big] , \nonumber\\
  \leq & \mathbb E\Big[\sup_{0\le t\le T} \Big( \max_{k: \|\cbb_k-X_t\|_{\infty} \leq \frac{2r}{(N-1)}}  |\Gamma^A(g)(X_t)-\Gamma^A(g)(\cbb_k)|\Big(\sum_{k: \|\cbb_k-X_t\|_{\infty} \leq \frac{2r}{(N-1)}}  \phi_{\cbb_k}(X_t) \Big)\Big)^2\mathbf{1}_{\Omega_r}(X_t)\Big] \nonumber \\
   \leq & \mathbb E\Big[\sup_{0\le t\le T} \max_{k: \|\cbb_k-X_t\|_{\infty} \leq \frac{2r}{(N-1)}}  |\Gamma^A(g)(X_t)-\Gamma^A(g)(\cbb_k)|^2\mathbf{1}_{\Omega_r}(X_t)\Big] \nonumber \\
    \leq & \mathbb E\Big[\sup_{0\le t\le T}  \max_{k: \|\cbb_k-X_t\|_{\infty} \leq \frac{2r}{(N-1)}}|\Gamma^A(g)(X_t)-\Gamma^A(g)(\cbb_k)|^2\mathbf{1}_{\Omega_r}(X_t)\Big] \nonumber \\
    \le&  L_{\Gamma^A}^2\left( \,
\mathbb E\!\left[
  \sup_{0 \le t \le T}
  \max_{k: \|\cbb_k-X_t\|_{\infty} \leq \frac{2r}{(N-1)}}|g(X_t) - g(\cbb_k)|^2
\right]\right.\nonumber \\
&\quad \left.+\left(\mathbb E\!\left[
  \sup_{0 \le t \le T}
  \max_{k: \|\cbb_k-X_t\|_{\infty} \leq \frac{2r}{(N-1)}}|g(X_t) - g(\cbb_k)|^2
\right]\right)^{1/2} \right)\nonumber \\ 
\le &L_{\Gamma^A}^2\left(\frac{(2rL_g)^2 d_2 }{(N-1)^2}+\left(\frac{(2rL_g)^2 d_2 }{(N-1)^2}\right)^{1/2} \right)  \nonumber\\
\le & CL_{\Gamma^A}^2\frac{2rL_gd_2^{1/2} }{(N-1)} \leq \frac{1}{4}\times \frac{\varepsilon_1^2 }{4},\label{epsilon square est}
\end{align}
where we use the Lipschitz assumption of $\Gamma^A$ from Assumption \eqref{assumption_G: american}, and $\left(\frac{(2rL_g)^2 d_2 }{(N-1)^2}\right)^{1/2}$ is the dominating term since $\frac{(2rL_g)^2 d_2 }{(N-1)^2}<1$ by choosing $\varepsilon_1$ to be sufficiently small. Following the same idea and estimate in \eqref{est: I2-2}, in order to get the desired estimate in \eqref{function S 12 A}, we make the following selection.

Up to a constant $c$ change, for $\varepsilon_1>0$ which will be specified,
    set $r=\left\lceil -\frac{2C_T}{c}\log \frac{\varepsilon_1^2}{16C_0}\right\rceil^{\frac{1}{\alpha}}$, and define $Q_r, \Omega_r$ as in Definition \ref{def: omega r},
    there exists a constant $N = O\left(rL_gd_1^{\frac{1}{2}}\varepsilon_1^{-2}\right)$ following from the designed estiamtes in \eqref{epsilon square est},    for some constant $C$ depending on $d_2, L_g$ and $r$, a network architecture $\mathcal{F}_1=\cF_{\rm NN}(d_2, 1 , \cL_1, \mathfrak p_1, K_1, \kappa_1, R_1)$ and  $\{\widetilde{q}_k\}_{k = 1}^{N_1}$ with $\widetilde{q}_k\in \mathcal{F}_1$, and $\{\cbb_k\}_{k = 1}^{N_1} \subset Q_r$ such that for any $g\in \mathcal G$,
 we separate the estimates into the following two parts,
    \begin{align}
       \left(  \E \bigg[ \sup_{0 \leq t \leq T} \left|\Gamma^A (g)(X_t)-\sum_{k = 1}^{N_1} \Gamma^A (g)(\cbb_k) \widetilde{q}_k(X_t)\right|^2 \bigg]\right)^{1/2}\leq  \varepsilon_1/2.
        \label{eq.operator.1A}
    \end{align}
For the ease of the notation, we denote $N_1 = CN^{d_2}$ for some constant $C$.
 Since $\varepsilon_1/2<1$, this further implies
    \begin{align}
        \E \bigg[ \sup_{0 \leq t \leq T} \left|\Gamma^A (g)(X_t)-\sum_{k = 1}^{N_1} \Gamma^A (g)(\cbb_k) \widetilde{q}_k(X_t)\right|^2 \bigg] 
        \le \varepsilon_1^2/4 \leq \varepsilon_1/2.
        \label{eq.operator.2AA}
    \end{align} 
 Combining the above two estimates, we get
    \begin{align}\label{function S 12 A}
    \|\Gamma^A(g)-\sum_{k=1}^{N_1}\Gamma^A(g)(\mathbf c_k)\tilde{q}_k\|_{\mathcal{S}_2}^{1,2}\le \varepsilon_1.    
    \end{align}
According to Theorem \ref{thm_function}, and the relations between \eqref{eq.operator.1A} and \eqref{eq.operator.2AA}, the network size $\mathcal F_1$ will be determined by the estimate in \eqref{eq.operator.2AA}.  
Since the Lipschitz assumption is not used in  \eqref{eqn_estimate_of_delta_in_function}, following the same estimate in \eqref{eqn_estimate_of_delta_in_function} and $r = O\left((\log(\varepsilon_1^{-2}) )^{\frac{1}{\alpha}} \right), N = O\left(rL_gd_1^{\frac{1}{2}}\varepsilon_1^{-2}\right)$ estimates, 
we have $\delta = O\left(d_1^{-\frac{d_2}{2}-1}\varepsilon_1^{\frac{1}{2} + 2d_2}(\log \varepsilon_1^{-2})^{-\frac{p+d_2}{\alpha}} \right)$.
Such a network has size,
\begin{align*}
\cL_1& = O\left((\frac{1}{2}d_2+ 2d_2^2)\log(\varepsilon_1^{-2}) \right), \mathfrak p_1 = O(1),\ K_1 = O\left((\frac{1}{2}d_2+ 2d_2^2)\log(\varepsilon_1^{-2}) \right),\\ 
\kappa_1&=O\left(\varepsilon_1^{-2d_2} \right), R_1 = 1.
\end{align*}

\noindent{\bf Step 2 (functional approximation)}: For each $k$, define the functional induced by the operator $\Gamma^A$ as follows,
\begin{align}
    \sF_{k}(\Gamma^A( g))=\Gamma^A (g)(\cbb_k).
\label{eq:fk american}
\end{align}
For any $g_1,g_2\in \cG$, and the forward process at $t$ starts at $\cbb_k$, we have 
\begin{align}
    & |\sF_{k}(\Gamma^A (g_1))- \sF_{k}(\Gamma^A  (g_2))|^2 \nonumber \\
    \leq & \mathbb E \bigg[ \sup_{0 \leq s \leq T}|\Gamma^A (g_1)(X_s)- \Gamma^A  (g_2)(X_s)|^2 \bigg] \nonumber\\
    \leq & L_{\Gamma^A}^2\left(   \mathbb E \bigg[ \sup_{0 \leq s \leq T} |g_1(X_s) - g_2(X_s)|^2 \bigg] + \left(\mathbb E \bigg[ \sup_{0 \leq s \leq T} |g_1(X_s) - g_2(X_s)|^2 \bigg] \right)^{1/2}\right)\label{eq.proof.approx.fLip american 2}\nonumber\\
    = & L_{\Gamma^A}^2 \|g_1 - g_2\|_{S^2}^{1,2},
\end{align}
where the last inequality follows from Assumption \ref{assumption_G: american}.
Recall the function $g_w(x)$ defined in \eqref{eq:gw-def},
for the functional $\mathsf F$ induced by the American option pricing operator $\Gamma^A$, we have
\begin{align}
\bigl|\sF(g)- \sF(g_w)\bigr|^{2}
&\le L_{\Gamma^A}^2\,\|g-g_w\|_{S^2}^{1,2} \nonumber \\
&\le L_{\Gamma^A}^2\,\|g-g_w\|_{S^2}^{2}+L_{\Gamma^A}\,\|g-g_w\|_{S^2}^{1}, \nonumber
\end{align}
which is equivalent to
\begin{align}
\bigl|\sF(g)- \sF(g_w)\bigr|
\le (L_{\Gamma^A}^2\,\|g-g_w\|_{S^2}^{2}+L_{\Gamma^A}\,\|g-g_w\|_{S^2}^{1})^{1/2}, \nonumber
\end{align}
where we assume that $\Gamma_A^2\le \Gamma_A$ to ease the computation which, up to a constant change, does not change the order of the radius size. (Similarly, $\Gamma_A\le \Gamma_A^2$ implies similar computations).
Following the idea to the derivation of \eqref{function S 12 A} from \eqref{eq.operator.1A} and \eqref{eq.operator.2AA}, up to a constant, for any $\varepsilon_2>0$, we can pick $r$ such that $L_{\Gamma_A}^2\,\|g-g_w\|_{S^2}^{2}\le \frac{\varepsilon_2^4}{64}\le \frac{\varepsilon_2^2}{8}$, which thus implies $L_{\Gamma^A}\|g-g_w\|_{S^2}^{1}\le \frac{\varepsilon_2^2}{8}$, and the following estimate
\begin{align}
\bigl|\sF(g)- \sF(g_w)\bigr|
 &\le( \frac{\varepsilon_2^2}{8}+\frac{\varepsilon_2^2}{8} )^{1/2} \le  \varepsilon_2 / 2. \label{1/4 varepsilon 2}
\end{align}

Thus, $L_{\Gamma_A}^2\,\|g-g_w\|_{S^2}^{2}\le \frac{\varepsilon_2^4}{64}$ determines the following parameters, following \eqref{est: J1}, \eqref{est: J2} and \eqref{est: F 1}, we have  
\begin{align}
\label{eqn_functional_r_delta_American}
    r_{2} =\left\lceil -\frac{2C_T}{c}\log \frac{\varepsilon_2^4}{128C_0L_\sF^2}\right\rceil^{\frac{1}{\alpha}}+1, \quad \delta_2 = \frac{C\varepsilon_2^2}{8\sqrt{2}L_\sF L_gd_1^{\frac{1}{2}} },
\end{align}
where $C$ is a constant.
Now, for any $g,\tilde g\in\mathcal{G}$, define $g_w$ and $\tilde g_w$ as in \eqref{eq:gw-def}, and set
\begin{equation*}
\bg=\bigl[g(c_1),\ldots,g(c_{N^{\delta}})\bigr]^{\mathsf T},
\qquad
\tilde \bg=\bigl[\tilde g(c_1),\ldots,\tilde g(c_{N^{\delta}})\bigr]^{\mathsf T}.
\end{equation*}
Define the function $h(\bg):= \sF(g_w)$. According to \eqref{eq:fk american} and \eqref{eq.proof.approx.fLip american 2}, we have
\begin{align}
|h(\bg)-h(\tilde \bg)|^{2}
&= \bigl| \sF(g_w)- \sF(\tilde g_w) \bigr|^{2}\nonumber\\
& \le L_{\Gamma_A}^2\,\|g_w-\tilde g_w\|_{S^2}^{1,2}\nonumber\\
&= L_{\Gamma_A}^2\,\mathbb{E}\!\left[ \sup_{t\le s\le T} \!\!
\Bigl( \sum_{k=1}^{N^\delta} \bigl(g(c_k)-\tilde g(c_k)\bigr)\, w_k(X_s) \Bigr)^{2} \right]\nonumber\\
&+ L_{\Gamma_A}^2\,\left( \mathbb{E}\!\left[ \sup_{t\le s\le T} \!\!
\Bigl( \sum_{k=1}^{N^\delta} \bigl(g(c_k)-\tilde g(c_k)\bigr)\, w_k(X_s) \Bigr)^{2} \right]\right)^{1/2}\nonumber\\
&\le L_{\Gamma_A}^2  \,\Bigl( \sum_{k=1}^{N^\delta} |g(c_k)-\tilde g(c_k)|^{2} \Bigr)\,
\mathbb{E}\!\left[ \sup_{t\le s\le T} \sum_{k=1}^{N^\delta} w_k(X_s)^{2} \right]\nonumber \\
&+ L_{\Gamma_A}^2  \,\Bigl( \sum_{k=1}^{N^\delta} |g(c_k)-\tilde g(c_k)|^{2} \Bigr)^{1/2}\,
\mathbb{E}\!\left[ \sup_{t\le s\le T} \sum_{k=1}^{N^\delta} w_k(X_s)^{2} \right]^{1/2}\nonumber \\
&\le L_{\Gamma_A}^2  \,\,\Bigl( \sum_{k=1}^{N^\delta} |g(c_k)-\tilde g(c_k)|^{2} +\left( \sum_{k=1}^{N^\delta} |g(c_k)-\tilde g(c_k)|^{2}\right)^{1/2} \Bigr)\nonumber\\
&\le L_{\Gamma_A}^2  (|\bg - \tilde{\bg}|^2 +|\bg - \tilde{\bg}|),
\label{lip est h American}
\end{align}
which is equivalent to 
\begin{align*}
    |h(\bg)-h(\tilde{\bg})|\le L_{\Gamma_A}(|\bg - \tilde{\bg}|^2 +|\bg - \tilde{\bg}|)^{1/2}.
\end{align*}
By Assumption \ref{assumption: polynomial}, we have the bound of function $g$ on $Q_r$ as below,
\[g(x)\le C_g(1+|x|^p)\le C_g(1+|r_2|^p):= \tilde{r}, \quad x\in Q_{r_2}.\]
Similarly, according to our definition in \eqref{eq:fk american}, $h$ is also bounded above. Applying Lemma \ref{lemma new function approx} (which is proved below), we claim that for $\frac{1}{2} \varepsilon_2>0$
the function (functional) $h(\bg)= \sF(g_w)$ can be approximated by a network $\cF_2=\cF_{\rm NN}(N^{\delta}, 1, \mathcal{L}_2, \mathfrak{p}_2, K_2, \kappa_2, R_2)$,
\begin{align*}
        \mathcal L_2 &= O\left(
        (N^{\delta}+\frac{1}{2}(N^{\delta})^2 )\log(N^{\delta}) + (2(N^{\delta})^2+N^{\delta}) \log\varepsilon_2^{-1}+ N^{\delta}\log(\tilde{r})
        \right),  \mathfrak p_2 = O(1), \\
        K_2 &= O\left(
         (N^{\delta}+\frac{1}{2}(N^{\delta})^2 )\log(N^{\delta}) + (2(N^{\delta})^2+N^{\delta}) \log\varepsilon_2^{-1}+ N^{\delta}\log(\tilde{r})
        \right), \\
\kappa_2 &=O(\varepsilon_2^{-N^{\delta}} ), R_2 = 1.
\end{align*}
That is
\begin{align}
    \sup_{g\in U} \left|\sF(g_w) - \sum_{k=1}^{N_2 } h({\bg}_k )\widetilde{q}_k \right| \leq \frac{1}{2}\varepsilon_2,
    \label{eqn_final_functional_approx}
\end{align}
where $N_2 = N^{N^{\delta}}$, with $N =  O(\sqrt{N^{\delta}} \varepsilon_2^{-2} )$. 
Combining the above estimates in \eqref{1/4 varepsilon 2} and \eqref{eqn_final_functional_approx}, such a network architecture gives a network $\widetilde{a}_k$ so that 
\begin{align*}
    \sup_g |\sF_{k}(\Gamma (g))-\widetilde{a}_{k}(\bg)|\leq \varepsilon_2.
\end{align*}
Therefore, 
\begin{align}
        &\E \bigg[\sup_{0 \leq s \leq T} \left|\sum_{k = 1}^{N_1} \sF_{k}(\Gamma g)\widetilde{q}_k(X_s) - \sum_{k = 1}^{N_1} \tilde{a 
        }_k(\bg ) \widetilde{q}_k(X_s)\right| \bigg] \nonumber\\
        =& \E \bigg[\sup_{0 \leq s \leq T} \left|\sum_{k = 1}^{N_1} \left(\sF_k(\Gamma g)-\widetilde{a}_k(\bg) \right) \widetilde{q}_k(X_s)\right| \bigg]  \nonumber\\
        \leq& \sum_{k = 1}^{N_1} \sup_{\bg} |\sF_k(\Gamma  g) -\widetilde{a}_k(\bg )| = N_1\varepsilon_2. 
        \label{eq.operator.2A}
    \end{align}

\noindent{\bf Step 3 (American pricing operator approximation)}: 
Applying the Cauchy–Schwarz inequality, using (\ref{function S 12 A}) and (\ref{eq.operator.2A}), we have
    \begin{align*}
        &\sup_{g\in \cG} \E \bigg[ \sup_{0 \leq s \leq T} \left|\Gamma^A g(X_s)-\sum_{k = 1}^{N_1} \widetilde{a}_k(\bg )\widetilde{q}_k(X_s)\right| \bigg] \nonumber \\
        \leq & \sup_{g\in \cG} \E \bigg[ \sup_{0 \leq s \leq T} \left|\Gamma^A (g)(X_s)-\sum_{k = 1}^{N_1} \sF_k(\Gamma g) \widetilde{q}_k(X_s)\right| \bigg]\\
&        + \sup_{g\in \cG} \E \bigg[\sup_{0 \leq s \leq T} \left|\sum_{k = 1}^{N_1} \sF_{k}(\Gamma^A g)\widetilde{q}_k(X_s) - \sum_{k = 1}^{N_1} \tilde{a 
        }_k(\bg ) \widetilde{q}_k(X_s)\right| \bigg]\\
        \leq & \sup_{g\in \cG} \E \bigg[ \sup_{0 \leq s \leq T} \left|\Gamma^A (g)(X_s)-\sum_{k = 1}^{N_1} \sF_k(\Gamma^A g) \widetilde{q}_k(X_s)\right|^2 \bigg]^{1/2}\\
&        + \sup_{g\in \cG} \E \bigg[\sup_{0 \leq s \leq T} \left|\sum_{k = 1}^{N_1} \sF_{k}(\Gamma^A g)\widetilde{q}_k(X_s) - \sum_{k = 1}^{N} \tilde{a 
        }_k(\bg ) \widetilde{q}_k(X_s)\right| \bigg]\\
        \leq &\sup_{g\in \cG} \|\Gamma^A(g)-\sum_{k=1}^{N_1}\Gamma^A(g)(\mathbf c_k)\tilde{q}_k\|_{\mathcal{S}_2}^{1,2} + \sup_{g\in \cG} \E \bigg[\sup_{0 \leq s \leq T} \left|\sum_{k = 1}^{N_1} \sF_{k}(\Gamma g)\widetilde{q}_k(X_s) - \sum_{k = 1}^{N_1} \tilde{a 
        }_k(\bg ) \widetilde{q}_k(X_s)\right| \bigg]\\
        \leq& \varepsilon_1 + N_1\varepsilon_2.
    \end{align*}
    Set $\varepsilon_2 = \varepsilon_1/(2N_1),\varepsilon_1=\frac{\varepsilon}{2}$,
    or $\varepsilon_2 = O\left( \varepsilon^{2d_2+1} (\log \varepsilon^{-2})^{-\frac{d_2}{\alpha}} d_1^{-\frac{d_2}{2}} \right)$,
   we then have
    \begin{align*}
        &\sup_{g\in \cG} \E \bigg[ \sup_{0 \leq s \leq T} \left|\Gamma^A g(X_s)-\sum_{k = 1}^{N_1} \widetilde{a}_k(\bg )\widetilde{q}_k(X_s)\right| \bigg] \leq \varepsilon.
    \end{align*}
Consequently, the network size is estimated to be,
\begin{align*}
        \mathcal L_2 &= O\left(
        (N^{\delta}+\frac{1}{2}(N^{\delta})^2 )\log(N^{\delta}) + (2(N^{\delta})^2+N^{\delta}) \log\varepsilon^{-(2d_2+1)}+ N^{\delta}\log(\tilde{r})
        \right),  \mathfrak p_2 = O(\varepsilon_2^{-(4d_1-2) N^{\delta} } ), \\
        K_2 &= O\left(
         (N^{\delta}+\frac{1}{2}(N^{\delta})^2 )\log(N^{\delta}) + (2(N^{\delta})^2+N^{\delta}) \log\varepsilon^{-2(d_2+1)}+ N^{\delta}\log(\tilde{r})
        \right), \\
\kappa_2 &=O(\varepsilon^{-N^{\delta}(2d_2+1) } ), R_2 = 1.
\end{align*}
By Lemma \ref{lem.cover.ball}, and equation \ref{eqn_functional_r_delta_American},
\begin{align*}
    N^{\delta} \leq C(\log \varepsilon_2^{-4})^{\frac{d_1}{\alpha}}\varepsilon_2^{-2d_1}d_1^{\frac{d_1}{2}}.
\end{align*}
Substitute back to $\varepsilon$, we have $N^{\delta} = O(\varepsilon^{-4d_1d_2-2d_1)})$.
It follows that $N_2 = O(\varepsilon^{-(4d_2-2) \varepsilon^{-4d_1d_2-2d_1}})$.


\end{proof}

\begin{lemma}\label{lemma new function approx}
Let $Q_{\tilde{r}} = [-\tilde{r}, \tilde{r}]^{N^{\delta}} $, $\bg=\bigl[g_1,\ldots,g_{N^{\delta}} \bigr]^{\mathsf T}\in Q_{\tilde{r}}$ and $\beta_{\bg} = \sup_{g\in Q_{\tilde{r}}}|h(g)|$.
    Assume function $h: Q_{\tilde{r}} \rightarrow \mathbb R$, with $N^{\delta}$ as an integer and some constant $\tilde{r}$,  and $h$ satisfies the following assumption,
    \begin{align}
        |h(g)-h(\tilde{g})|\le L (|g-\tilde{g}|^2+|g-\tilde{g}|)^{1/2}, g, \tilde{g}\in Q_{\tilde{r}},
        \label{eqn_lemma_function_approx_american}
    \end{align}
where $L$ is a constant. 
For $\varepsilon_2>0$, there exist $N_2$, $\{ \bg_k\}_{k = 1}^{N_2}\subset Q_{\tilde{r}}$, and a network $\cF_2=\cF_{\rm NN}(N^{\delta}, 1, \mathcal{L}_2, \mathfrak{p}_2, K_2, \kappa_2, R_2)$, where
\begin{align*}
        \mathcal L_2 &= O\left(
        (N^{\delta}+\frac{1}{2}(N^{\delta})^2 )\log(N^{\delta}) + (2(N^{\delta})^2+N^{\delta}) \log\varepsilon_2^{-1}+ N^{\delta}\log(\tilde{r})
        \right),  \mathfrak p_2 = O(1), \\
        K_2 &= O\left(
         (N^{\delta}+\frac{1}{2}(N^{\delta})^2 )\log(N^{\delta}) + (2(N^{\delta})^2+N^{\delta}) \log\varepsilon_2^{-1}+ N^{\delta}\log(\tilde{r})
        \right), \\
\kappa_2 &=O(\varepsilon_2^{-N^{\delta}} ), R = 1,
\end{align*}
such that 
\begin{align*}
 \sup_{g\in Q_{\tilde{r}}} \left| h(g) - \sum_{k=1}^{N_2 } h({\bg}_k )\widetilde{q}_k \right| \leq \varepsilon_2, 
\end{align*}
where $N_2 = N^{N^{\delta}}$, with $N =  O(\sqrt{N^{\delta}} \varepsilon_2^{-2} )$.
\end{lemma}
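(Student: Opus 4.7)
The plan is to mirror the construction used in Theorem \ref{thm_function}, but restricted to the compact cube $Q_{\tilde{r}}$ (so no tail-probability term appears) and with the weaker H\"older-type modulus \eqref{eqn_lemma_function_approx_american} replacing the Lipschitz condition. First, I would place a uniform grid $\{\bg_k\}_{k=1}^{N_2}$ on $Q_{\tilde{r}}$ with $N$ points in each of the $N^{\delta}$ directions, so $N_2=N^{N^{\delta}}$, with the spacing $2\tilde{r}/N$ to be chosen below. Using the bump $\psi$ from \eqref{eqn_psi} and the product form \eqref{eqn_phi} (now in dimension $N^{\delta}$ with half-width $\tilde{r}$), I build a partition of unity $\{\phi_{\bg_k}\}$ subordinate to this grid and set the piecewise-constant surrogate
\begin{equation*}
\bar h(\bg)\;:=\;\sum_{k=1}^{N_2} h(\bg_k)\,\phi_{\bg_k}(\bg),\qquad \bg\in Q_{\tilde{r}}.
\end{equation*}

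Next, I would bound $|h(\bg)-\bar h(\bg)|$ using \eqref{eqn_lemma_function_approx_american}. For any $\bg\in Q_{\tilde{r}}$, only the $\bg_k$'s with $\|\bg-\bg_k\|_\infty\le \tfrac{2\tilde r}{N-1}$ contribute, and the partition-of-unity identity collapses the sum to the maximum local increment, exactly as in the derivation of \eqref{est: I2}. Writing $\rho_N:=\sqrt{N^{\delta}}\cdot \tfrac{2\tilde r}{N-1}$ for the worst-case Euclidean displacement, assumption \eqref{eqn_lemma_function_approx_american} yields
\begin{equation*}
\sup_{\bg\in Q_{\tilde{r}}}|h(\bg)-\bar h(\bg)|\;\le\; L\,\bigl(\rho_N^{\,2}+\rho_N\bigr)^{1/2}.
\end{equation*}
To force this to be at most $\varepsilon_2/2$, one needs $\rho_N\lesssim \varepsilon_2^{\,2}/L^{2}$, i.e.\ $N=O(\sqrt{N^{\delta}}\,\tilde r\,\varepsilon_2^{-2})$, which matches the stated $N=O(\sqrt{N^{\delta}}\,\varepsilon_2^{-2})$ (absorbing $\tilde r$ into the constant). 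This is the place where the weaker H\"older-type condition differs from Lipschitz: the required grid width scales as $\varepsilon_2^{2}$ rather than $\varepsilon_2$, which is precisely what produces the extra factor of $2$ in the exponent $\varepsilon_2^{-2}$ compared to the corresponding step in Theorem \ref{thm_function}.

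Then I would replace each $\phi_{\bg_k}$ by a ReLU product network $\widetilde q_k$ built by iterating the approximate multiplier $\widetilde{\times}$ from Lemma \ref{lemma_pp3} over the $N^{\delta}$ one-dimensional factors $\psi(\cdot)$. As in the derivation leading to $\mathcal{E}_{d_1+1}=0$ in the proof of Theorem \ref{thm_function}, a telescoping argument gives $\|\phi_{\bg_k}-\widetilde q_k\|_{L^\infty(Q_{\tilde r})}\le N^{\delta}\,\delta$, where $\delta$ is the per-multiplication accuracy. Since $|h(\bg_k)|\le \beta_{\bg}$ for every $k$, the aggregate approximation error obeys
\begin{equation*}
\sup_{\bg\in Q_{\tilde r}}\Bigl|\bar h(\bg)-\sum_{k=1}^{N_2} h(\bg_k)\widetilde q_k(\bg)\Bigr|\;\le\; N_2\,\beta_{\bg}\,N^{\delta}\,\delta,
\end{equation*}
so choosing $\delta= O\!\bigl(\varepsilon_2\,(N_2 N^{\delta})^{-1}\bigr)=O(\varepsilon_2\,N^{-N^{\delta}}(N^{\delta})^{-1})$ makes this at most $\varepsilon_2/2$. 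A triangle inequality then gives the desired bound $\varepsilon_2$.

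Finally, I would read off the network parameters from Lemma \ref{lemma_pp3} applied with accuracy $\delta$ and with $M$ of order $\tilde r$: the depth and nonzero-parameter count scale like $O(\log \delta^{-1})$, repeated $N^{\delta}$ times in a balanced binary-tree implementation of the product, which gives $\mathcal L_2,K_2=O\bigl((N^{\delta})^2\log N^{\delta}+(N^{\delta})^2\log \varepsilon_2^{-1}+N^{\delta}\log\tilde r\bigr)$; the weight bound becomes $\kappa_2=O(\delta^{-1})=O(\varepsilon_2^{-N^{\delta}})$, and the width stays $O(1)$, matching the stated sizes. The principal obstacle here is purely bookkeeping: one has to track how the H\"older exponent $1/2$ implicit in \eqref{eqn_lemma_function_approx_american} propagates through the grid-size choice (giving $N=O(\sqrt{N^{\delta}}\varepsilon_2^{-2})$ rather than $\varepsilon_2^{-1}$) and then amplifies the exponent on $\varepsilon_2$ in $\kappa_2$ through the factor $N_2=N^{N^{\delta}}$; the architectural steps themselves are the same ReLU product construction used in Theorem \ref{thm_function}.
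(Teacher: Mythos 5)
Your proposal is correct and follows essentially the same route as the paper's proof: the same uniform grid with $N=O(\sqrt{N^{\delta}}\,\tilde r\,\varepsilon_2^{-2})$ points per direction forced by the H\"older-type modulus, the same partition of unity $\{\phi_{\bg_k}\}$ and piecewise-constant surrogate $\bar h$, the same iterated $\widetilde{\times}$ construction with per-multiplication accuracy $\delta=O\bigl(\varepsilon_2 (N^{\delta} N^{N^{\delta}}\beta_{\bg})^{-1}\bigr)$, and the same triangle-inequality split. Your bookkeeping of how the exponent $1/2$ in the modulus inflates $N$ to $\varepsilon_2^{-2}$ matches the paper's choice $N=CL_{\Gamma_A}^2\varepsilon_2^{-2}(N^{\delta})^{1/2}\tilde r$ exactly.
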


\begin{proof}[Proof of Lemma]
Let us partition $Q_{\tilde{r}}$ into $N^{N^{\delta}}$ subcubes for some $N$ to be specified later, and $N^{\delta}$ follows the Theorem \ref{thm_operator_american}.
Let $\{\bg_k\}_{k=1}^{N^{N^{\delta}}}$ be a uniform grid on $Q_{\tilde{r}}$ so that each $\bg_k\in \left\{-\tilde{r},-\tilde{r}+\frac{2\tilde{r}}{N-1}, ..., \tilde{r}\right\}^{N^{\delta}}$ for each $k$.
Define 
\begin{align}
    \psi(a) = \begin{cases}
        1, |a|<1,\\
        0, |a|>2, \\
        2-|a|, 1\leq |a|\leq 2,
    \end{cases}
    \label{eqn_psi}
\end{align}
with $a\in\mathbb{R}$, and 
\begin{align}
    \phi_{\bg_k}(\xb) = \prod_{j = 1}^{N^{\delta}} \psi \left(\frac{3(N-1)}{2\tilde{r} }(x_j-g_{k,j})\right).
    \label{eqn_phi}
\end{align}
In this definition, we have $\supp(\phi_{\bg_k})=\left\{\xb: \|\xb-\bg_k\|_{\infty}\leq \frac{4\tilde{r}}{3(N-1)}\right\}\subset \left\{\xb: \|\xb-\bg_k\|_{\infty}\leq \frac{2\tilde{r}}{(N-1)}\right\}$ and 
$$
\max_k \phi_{\bg_k} =1, \quad \sum_{k=1}^{N^{N^{\delta}}} \phi_{\bg_k}=1.
$$
We construct a piecewise constant approximation to $h$ as
$$
\bar{h}(\xb)=\sum_{k=1}^{N^{N^{\delta}}} h(\bg_k)\phi_{\bg_k}(\xb).
$$
It follows that,
\begin{align}
    |h(\bg)-\bar{h}(\bg)|=&\left| \sum_{k=1}^{N^{N^{\delta}}} \phi_{\bg_k}(\xb)(h(\bg)-h(\bg_k))\right| \nonumber\\
    \leq & \sum_{k=1}^{N^{N^{\delta}}} \phi_{\bg_k}(\xb)|h(\bg)-h(\bg_k)| \nonumber\\
    = &\sum_{k: \|\bg_k-\xb\|_{\infty} \leq \frac{2\tilde{r} }{(N-1)}}  \phi_{\bg_k}(\xb)|(h(\xb)-h(\bg_k))| \nonumber\\
    \leq & \max_{k: \|\bg_k-\xb\|_{\infty}\leq \frac{2\tilde{r} }{(N-1)}}|(h(\xb)-h(\bg_k))|\left(\sum_{k: \|\bg_k-\xb\|_{\infty} \leq \frac{2\tilde{r} }{(N-1)}}  \phi_{\bg_k}(\xb)\right) \nonumber\\
    \leq& \max_{k: \|\bg_k-\xb\|_{\infty}\leq \frac{2\tilde{r} }{(N-1)}}|(h(\xb)-h(\bg_k))| \nonumber\\
    \leq & L_{\Gamma_A}\left(\left(\frac{2\sqrt{N^{\delta}}\tilde{r} }{N-1}\right)^2 + \left(\frac{2\sqrt{N^{\delta}}\tilde{r}}{N-1} \right)\right)^{\frac{1}{2}}<\frac{\varepsilon_2}{2},
\end{align}
where the last line follows from the Assumption \ref{eqn_lemma_function_approx_american}.
Setting $N = C L_{\Gamma_A}^2\varepsilon_2^{-2} (N^{\delta})^{\frac{1}{2}}\tilde{r}$,
where $C$ is a constant.
Now we adopt neural network with structure $\tilde{q}_k\in\mathcal{F}_2$ to be specified later to approximate $\phi_{{\bg}_k}$ such that $\|\phi_{{\bg}_k} - \widetilde{q}_{k}\|_{L^{\infty}(Q_{\tilde{r}})}\leq N^{\delta}\tilde{\delta}$, with $\tilde{\delta}$ to be specified later.
We hence have,
    \begin{align}
        \left\| \sum_{k=1}^{N^{N^{\delta}}} 
        h({\bg}_k )\widetilde{q}_k - \bar{h} \right\|_{L^{\infty}(Q_{\tilde{r}}) } 
        \leq & \sum_{k=1}^{N^{N^{\delta}}}|h({\bg}_k)|\|\widetilde{q}_k-\phi_{{\bg}_k}\|_{L^{\infty}(Q_{\tilde{r}}) } \nonumber \\
        \leq &  N^{\delta}N^{N^{\delta}}\beta_{\bg}\tilde{\delta}.
    \end{align}
Now set $\tilde{\delta} = C\frac{\varepsilon_2}{N^{\delta}N^{N^{\delta}}\beta_{\bg}}$, where $C$ is a constant.
Consequently the network has an architecture, $\cF_2=\cF_{\rm NN}(N^{\delta}, 1, \mathcal{L}_2, \mathfrak{p}_2, K_2, \kappa_2, R_2)$, where
\begin{align*}
        \mathcal L_2 &= O\left(
        (N^{\delta}+\frac{1}{2}(N^{\delta})^2 )\log(N^{\delta}) + (2(N^{\delta})^2+N^{\delta}) \log\varepsilon_2^{-1}+ N^{\delta}\log(\tilde{r})
        \right),  \mathfrak p_2 = O(1), \\
        K_2 &= O\left(
         (N^{\delta}+\frac{1}{2}(N^{\delta})^2 )\log(N^{\delta}) + (2(N^{\delta})^2+N^{\delta}) \log\varepsilon_2^{-1}+ N^{\delta}\log(\tilde{r})
        \right), \\
\kappa_2 &=O(\varepsilon_2^{-N^{\delta}} ), R_2 = 1,
\end{align*}
here the hidden constant depends on $L_{\Gamma_A}$ and $\beta_{\bg}$.
\end{proof}

\section{Algorithm}\label{sec: numerics}
A unifying view to solve a basket of American options has been studied in
\cite{bank2003american} by using one unified Snell envelope. In this section, we use the proposed deep neural operator to solve this basket of American options problem. In particular, we are able to provide the exercise boundary for new strike prices within the range of our training sets. The precise training process and model specification is presented below.

In the following numerical example, we train a deep operator neural network to obtain the pricing operator of Bermudan put option. Then we plot the exercise boundaries for various terminal payoff functions in Figure \ref{fig:put}.

The ground-truth training data are produced by a fully implicit finite-difference discretization of the Black--Scholes American pricing PDE in log-price variables on a uniform grid, closely following standard references on PDE methods for options \cite{WilmottHowisonDewynne1995}. 

Under the risk–neutral measure, the price of an American option with strike \(K\), volatility \(\sigma>0\), and risk–free rate \(r > 0\) satisfies the Black–Scholes PDE with free boundary as in \eqref{PDE free}
with terminal condition $u(T,x) = g_k(x) = \max(K-x,0). $

The space--time grid is given by \(x_j = x_{\min}+j\Delta x\) for \(j=0,\dots,N_x-1\) and \(t_n = n\Delta t\) for \(n=0,\dots,N_t\), with \(\Delta x = (x_{\max}-x_{\min})/(N_x-1)\), \(\Delta t = T/N_t\). In our numerical example, we set the risk-free interest rate \( r = 0.1 \), the volatility \( \sigma = 0.2 \),  the time to maturity \( T = 1 \),  $N_t = 50$ and $N_x = 300$. The price interval is chosen wide enough to contain the early-exercise region for all strikes in the training range: with $K_{\min}=90$ and $K_{\max}=120$, we take $x_{\min}=K_{\min}/2=45$ and $x_{\max}=1.5K_{\max}=180$.

We first perform a log transformation such that let \(y=\ln x\) and define \(v(y,t):=u(x,t)=u(e^{y},t)\). Then
\[
u_x = \frac{1}{x}v_y,\qquad
u_{xx} = \frac{1}{x^2}\left(v_{yy}-v_y\right),
\]
and \eqref{PDE free} becomes the constant–coefficient convection–diffusion equation
\begin{equation}
\label{eq:log-PDE}
\partial_t v + \frac{1}{2}\sigma^2 v_{yy} + \mu\, v_y - r v = 0,
\qquad \mu := r - \tfrac12\sigma^2.
\end{equation}
Denote \(v_j^n \approx v(y_j,t_n)\).
Use centered differences in space at time level \(n\):
\[
v_y(y_j,t_n) \approx \frac{v_{j+1}^n - v_{j-1}^n}{2\Delta y},\qquad
v_{yy}(y_j,t_n) \approx \frac{v_{j+1}^n - 2v_{j}^n + v_{j-1}^n}{\Delta y^2}.
\]
The finite difference algorithm runs backward in time from \(t_{N_t}=T\) to \(t_0=0\), the fully implicit step
\begin{equation}
\label{eq:BE-stencil}
\frac{v_j^{n}-v_j^{n+1}}{\Delta t}
+ \frac12\sigma^2 \frac{v_{j+1}^{n}-2v_j^{n}+v_{j-1}^{n}}{\Delta x^2}
+ \mu \frac{v_{j+1}^{n}-v_{j-1}^{n}}{2\Delta x}
- r v_j^{n} = 0.
\end{equation}
After rearranging into matrix form, and enforcing the free boundary condition, $(v)^n $ are obtained from $(v)^{n+1}$.

Our approach uses operator learning for Bermudan-style put options. From sampled space--time values of option prices, we train a neural operator that, given a payoff function, reconstructs the full price surface and thereby recovers the optimal exercise (stopping) boundary.

Let \(\Gamma\) denote the pricing operator that maps a payoff \(g_k\) to its price surface \(u_k\). On a grid \(\{x_j\}_{j=1}^{N_x}\times\{t_n\}_{n=1}^{N_t}\),
\[
u_k(x_j,t_n)=\Gamma g_k(x_j,t_n).
\]
The neural operator $\Gamma_{\theta}$ with parameters $\theta$ is trained to approximate the pricing operator by minimizing the empirical mean-squared error
\begin{equation}\label{eqn:obj}
\mathcal{L}(\theta)
= \frac{1}{K\,N_t\,N_x}\sum_{k=1}^{K}\sum_{j=1}^{N_x}\sum_{n=1}^{N_t}
\bigl|\Gamma_\theta g_k (x_j,t_n)-u_k(x_j,t_n)\bigr|^2.
\end{equation} 
This optimization ensures that the operator network learns an accurate mapping from input payoff functions to their corresponding option price surfaces. Our algorithm is capable of computing the exercise boundary for any strike price between 90 and 120. We select six payoff functions with different strike prices and present their corresponding exercise boundaries in the Figure \ref{fig:put}. Once trained, the learned operator \( \Gamma_\theta \) can be utilized to recover the entire exercise boundary from the approximated solution surface.

\begin{figure*}[htb] 
    \centering 
\begin{minipage}[t]{.5\textwidth}
\begin{subfigure}{\textwidth}
  \includegraphics[width=\linewidth]{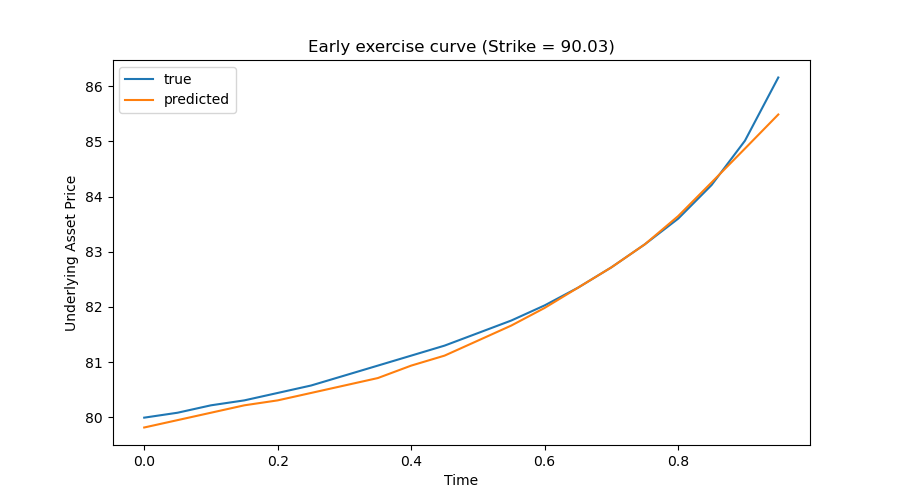}
\end{subfigure}\hfil 
\begin{subfigure}{\textwidth}
  \includegraphics[width=\linewidth]{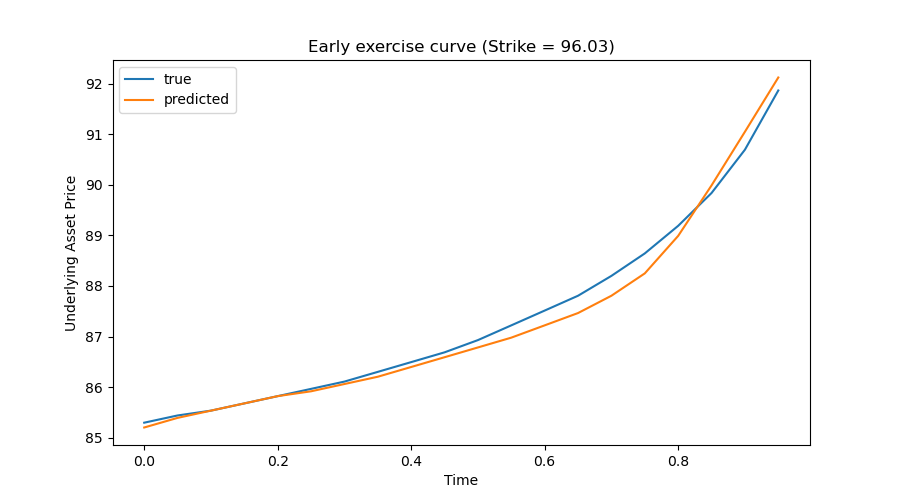}
\end{subfigure}\hfil 
\end{minipage}\hfil
\begin{minipage}[t]{.5\textwidth}
\begin{subfigure}{\textwidth}
  \includegraphics[width=\linewidth]{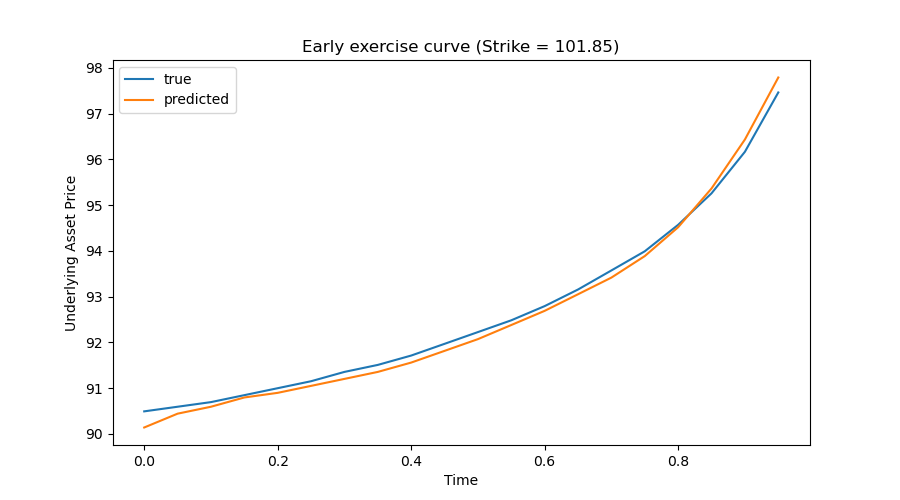}
\end{subfigure}

\begin{subfigure}{\textwidth}
  \includegraphics[width=\linewidth]{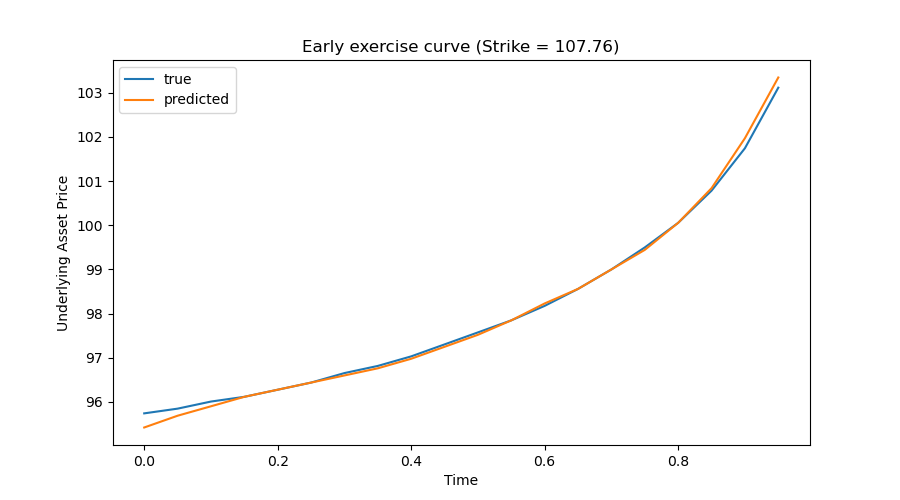}
\end{subfigure}\hfil 

\end{minipage}
\end{figure*}


\setcounter{figure}{0}
\begin{figure}[htb]
    \centering
    \begin{subfigure}[t]{0.48\textwidth}
        \includegraphics[width=\linewidth]{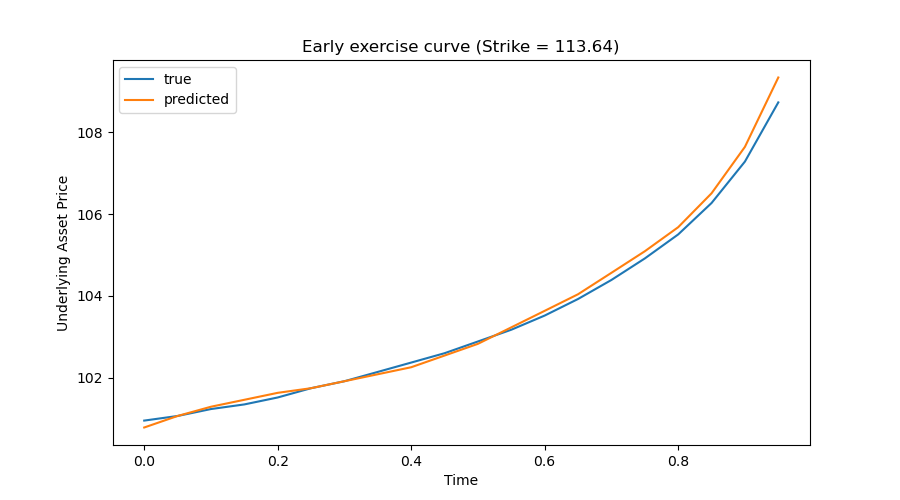}
        \label{fig:113}
    \end{subfigure}
    \hfill
    \begin{subfigure}[t]{0.48\textwidth}
        \includegraphics[width=\linewidth]{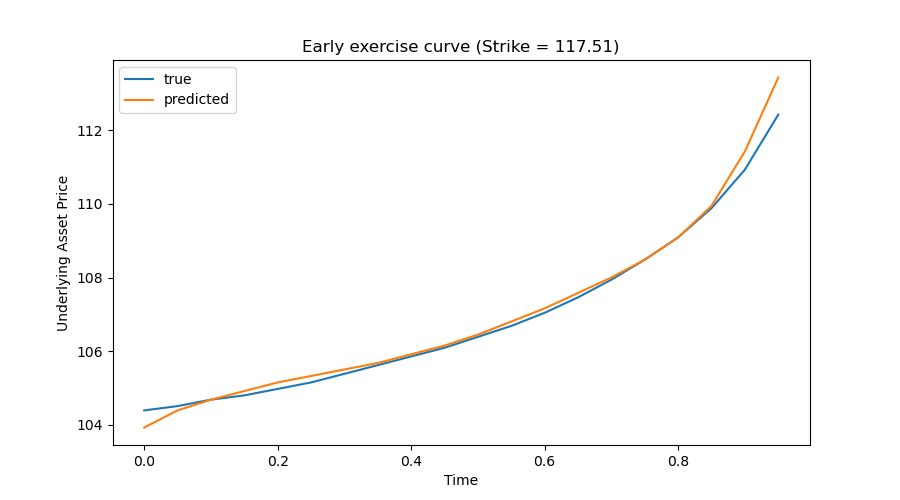}
        \label{fig:117}
    \end{subfigure}
    \caption{Exercise boundaries for American put options.}
    \label{fig:put}
\end{figure}

\newpage
 \bibliographystyle{plain}
 \bibliography{ref}
\end{document}